\documentclass[11pt]{article}
\usepackage[utf8]{inputenc}
\usepackage[T1]{fontenc}
\usepackage[margin=1in]{geometry}
\usepackage{amsthm}
\usepackage{graphicx} 
\usepackage{natbib}
\usepackage{xcolor}
\usepackage{longtable}
\usepackage{booktabs}
\usepackage{array}
\usepackage{url}
\definecolor{darkmagenta}{HTML}{8B008B}

\usepackage{amsmath,amsfonts,bm}

\def\eqref#1{equation~\ref{#1}}

\def\1{\bm{1}}

\def\va{{\bm{a}}}
\def\vb{{\bm{b}}}

\def\ve{{\bm{e}}}

\def\vg{{\bm{g}}}

\def\vp{{\bm{p}}}
\def\vq{{\bm{q}}}

\def\vu{{\bm{u}}}
\def\vv{{\bm{v}}}

\def\vx{{\bm{x}}}
\def\vy{{\bm{y}}}

\def\mA{{\bm{A}}}

\def\mF{{\bm{F}}}
\def\mG{{\bm{G}}}
\def\mH{{\bm{H}}}
\def\mI{{\bm{I}}}

\def\mM{{\bm{M}}}
\def\mN{{\bm{N}}}

\def\mP{{\bm{P}}}

\def\mS{{\bm{S}}}

\def\mU{{\bm{U}}}

\def\mX{{\bm{X}}}

\DeclareMathAlphabet{\mathsfit}{\encodingdefault}{\sfdefault}{m}{sl}
\SetMathAlphabet{\mathsfit}{bold}{\encodingdefault}{\sfdefault}{bx}{n}

\newcommand{\R}{\mathbb{R}}

\newcommand{\paren}[1]{\left(#1\right)}
\newcommand{\norm}[1]{\left\|#1\right\|}
\newcommand{\inner}[2]{\left\langle #1, #2\right\rangle}

\DeclareMathOperator{\Tr}{Tr}

\newtheorem{theorem}{Theorem}
\newtheorem{lemma}{Lemma}

\usepackage[capitalise,noabbrev]{cleveref}

\title{Understanding the Subspace Stabilization of the Hessian and Gradient Covariance Matrix}
\author{Fangshuo Liao \\ Rice University \\ Fangshuo.Liao@rice.edu \and Anastasios Kyrillidis \\ Rice University \\ anastasios@rice.edu}
\date{September 2026}

\begin{document}

\maketitle

\begin{abstract}
The phenomenon of the top subspace stabilization of the Hessian matrix is an surprising and critical aspect in study of the second-order information of neural network training. Prior work argues that the top subspace of the Hessian stabilizes by measuring the overlap between the top subspaces of the step-wise Hessian, and explains this stabilization with diminishing parameter change in the late phase of training. In this paper, we define a new instability metric for the subspace evolution, and use it to detect subspace stabilization that is independent of the magnitude of parameter change. In the meantime, we observe that the gradient covariance matrix has a similar property of its top subspace to the Hessian. By using a between-class and within-class decomposition of the gradient covariance matrix, we identify an explicit form that gives a near-perfect approximation of the top-$(C-1)$ subspace of the Hessian and the gradient covariance matrix. In the gradient flow set-up, we show that the slow evolution of the idenfied approximation is due to the separation between the outlier and the bulk eigenvalues of the Hessian matrix, thus providing an explanation to the phenomenon of the top subspace stabilization of the Hessian matrix.
\end{abstract}

\section{Introduction}
The loss landscape of neural network training is believed to be highly nonconvex \citep{Li2018Visualizing,swirszcz2017localminimatrainingneural,Goldblum2020Truth}, posing great challenge to the optimization process that is central to modern deep learning. Yet, empirical observations do not align well with this belief: training neural networks starting from random initialization often achieves consistent loss decrease \citep{Choromanska2015Loss,keskar2017on}, especially in large models \citep{kaplan2020scalinglawsneurallanguage,hoffmann2022trainingcomputeoptimallargelanguage}. Diving deeper into the convergence behavior, \cite{schaipp2025the} demonstrates that the behavior of learning rate schedules resembles the theory developed from convex non-smooth optimization. This near-convex behavior is further justified by the follow-up work from the perspective of convex dominance \citep{bu2026convexdominancedeeplearning}. The various empirical success implies that neural network training may not be as nonconvex as believed.

From the perspective of the Hessian matrix, the near-convex behavior is justified by the low-rankness of the Hessian \citep{sagun2017eigenvalueshessiandeeplearning,gurari2018gradientdescenthappenstiny,papyan2019measurementsthreelevelhierarchicalstructure}, where the Hessian's eigenvalue spectrum is shown to have a few positive outliers with the remaining having a small magnitude. However, the eigenvalue spectrum does not explain all the behaviors in the optimization of deep learning loss. To understand the acceleration of momentum-based optimizer for training neural networks, \cite{liao2024provable} proposed the subspace partial-convexity, which requires the top subspace to remain nearly non-changing. \cite{wen2025understanding} based its analysis on the condition that the ``river" direction in the river-valley structure of the loss landscapes changes slowly to derive theoretical results of the WSD schedule consistent with the empirical phenomenon. With a even simpler setting, \cite{qiu2025scaling} derives a predictive scaling dynamic based on the quadratic model, and \cite{meterez2026defensequadraticmodel} show that training neural networks based on its local quadratic approximation preserves the performance for up to 10\% of the training period. Together, these results suggest that the stabilization of at least the top subspace of the Hessian, as demonstrated by \cite{gurari2018gradientdescenthappenstiny}, is crucial to the understanding of neural network training.

Nevertheless, despite the empirical observation in \cite{gurari2018gradientdescenthappenstiny}, it is not clear whether the behavior is consistent in different training setups, and why the top subspace stabilization of the Hessian happens. In particular, \cite{gurari2018gradientdescenthappenstiny} measures the subspace overlap between consecutive checkpoint as the cosine angle between the subspaces. Despite being a valid metric of subspace change, this quantity is sensitive to the magnitude of the parameter change between checkpoints: if optimizers does not introduce a large enough parameter update due to reasons such as small learning rate, then the top Hessian subspaces between two iterates are inherently small. This reasoning is adopted by \cite{jaiswal2025from} to theoretically justify the stabilization of the top Hessian subspace when the training loss converges. However, it remains an open question whether the Hessian top subspace stabilization happens independent of the magnitude of the parameter update, and, if so, what is the reason behind it.

To answer the question of \textit{whether the top Hessian subspace really stabilizes}, we introduce a metric that scales the subspace change with the inverse of the parameter update magnitude to more accurately reflect the Lipschitzness of the subspace movement. Using the metric, we demonstrate empirically that the standard training settings in \cite{song2025does,cohen2021gradient} indeed exhibit top Hessian subspace stabilization independent of the update magnitude. However, we also show that, by modifying the architecture of the neural network, our metric identifies the ablation of the stabilization that is not observable using the step-wise subspace overlap metric.

To answer the question of \textit{why the top Hessian subspace stabilizes}, we take a detour to first focus on the mini-batch gradient covariance matrix, which is shown to highly correlate with the Hessian matrix \citep{zhu2019the}. We observe that the gradient covariance matrix exhibit low-rank property similar to the Hessian, whose top subspace also stabilizes beyond the initial stage of training. In addition, we observe a surprising alignment between the top subspace of the Hessian and the gradient covariance matrix, despite that the two matrices are different. Focusing on the multi-class classification task, we adopt a similar approach to the with-class and between-class decomposition of the Hessian subspace in \cite{papyan2020tracesclasscrossclassstructurepervade} to show that the top subspace of both the Hessian and the gradient covariance matrix can be described by the span of the centered per-class gradient. With this explicit identifier, we adopt matrix ODE tools to show that, under gradient flow training, the subspace change is small when the largest eigenvalue of the Hessian and per-class Hessian in the \textit{bulk} subspace is small. Together, our discovery makes the following contribution:
\begin{itemize}
    \item We use the overlap and instability metric to detect settings where the top Hessian subspace stabilizes independent of the parameter update magnitude.
    \item We show that the mini-batch gradient covariance possesses similar low-rankness and top subspace stabilization property. In addition, we show an alignment between the Hessian's and gradient covariance matrix's top subspace.
    \item We give an explicit identifier for the top subspace of the Hessian and the gradient covariance matrix. Using this identifier, we give a theoretical account of the top subspace stabilization phenomenon that depend on the separation between the top and bulk eigenvalues.
\end{itemize}

\textbf{Notations.} We use small regular letters (e.g. $a$) to denote scalars, small bold-face letters (e.g. $\va$) to denote vectors, and capital bold-face letters (e.g. $\mA$) to denote matrices. For a matrix $\mM$, we use $\sigma_i(\mM)$ to denote its $i$th singular value, $\sigma_{\min}(\mM)$ to denote its minimum singular value, and $\norm{\mM}_2,\norm{\mM}_F$ to denote its operator and Frobenius norm, respectively. For a square matrix $\mM$, we use $\lambda_i(\mM)$ to denote its $i$th eigenvalue and $\lambda_{\min}(\mM)$ to denote its smallest eigenvalue. For a symmetric matrix $\mM$, we use $\text{span}(\mM)$ to denote the row/column space of $\mM$.

\section{Does Hessian Top Subspace Really Stabilize?}
We are interested in training a neural network $f\paren{\bm{\theta},\vx}$ with parameter $\bm{\theta}\in\R^p$ operating on inputs $\vx\in\R^d$. Let the training data be $\left\{\paren{\vx_i,y_i}\right\}_{i=1}^n$. Our focus is on the $C$-class classification task with $y_i\in[C]$ denote the labels, and the neural network $f\paren{\bm{\theta},\vx}$ trained over the cross-entropy loss
\[
    \mathcal{L}\paren{\bm{\theta}} := \frac{1}{n}\sum_{i=1}^n\ell\paren{f\paren{\bm{\theta},\vx_i},y_i};\quad \ell\paren{\hat{\vy},c} = - \log \hat{\vy}_{c}
\]
Here $f\paren{\bm{\theta,\vx}}$ outputs logits. For the convenience of the study, we use $\vp(\bm{\theta},\vx) = \texttt{softmax}(f\paren{\bm{\theta},\vx})$ to denote the softmax probability. 
Denote the neural network gradient and Hessian as $\vg\paren{\bm{\theta}} = \nabla_{\bm{\theta}}\mathcal{L}\paren{\bm{\theta}}$ and $\mH = \nabla^2_{\bm{\theta}}\mathcal{L}\paren{\bm{\theta}}$ respectively. Let the eigen-decomposition of $\mH\paren{\bm{\theta}}$ be $\mH = \mU\paren{\bm{\theta}}\bm{\Lambda}\paren{\bm{\theta}}\mU\paren{\bm{\theta}}^\top$ with $\bm{\Lambda}\paren{\bm{\theta}}_{i,i}\geq \bm{\Lambda}\paren{\bm{\theta}}_{i+1,i+1}$, i.e., the eigenvalues are sorted in descending order. We define the top-$K$ subspace of $\mH\paren{\bm{\theta}}$ as $\mU_K\paren{\bm{\theta}} \in \R^{p\times K}$ containing the first $K$ columns of $\mU\paren{\bm{\theta}}$ for some $K\leq p$. Given two subspaces $\mU_K$ ad $\mU_K'$, \cite{gurari2018gradientdescenthappenstiny} defines the overlap metric as 
\begin{equation}\label{eq:overlap_metric}
    \text{Overlap}\paren{\mU_K,\mU_K'} = \frac{\Tr\paren{\mU_K\mU_K^\top\mU_K'\mU_K'^\top}}{\sqrt{\Tr\paren{\mU_K\mU_K^\top}\Tr\paren{\mU_K'\mU_K'^\top}}} = \frac{1}{K}\norm{\mU_K^\top\mU_K'}_F^2
\end{equation}

\begin{figure}[t!]
    \centering
    \includegraphics[width=\linewidth]{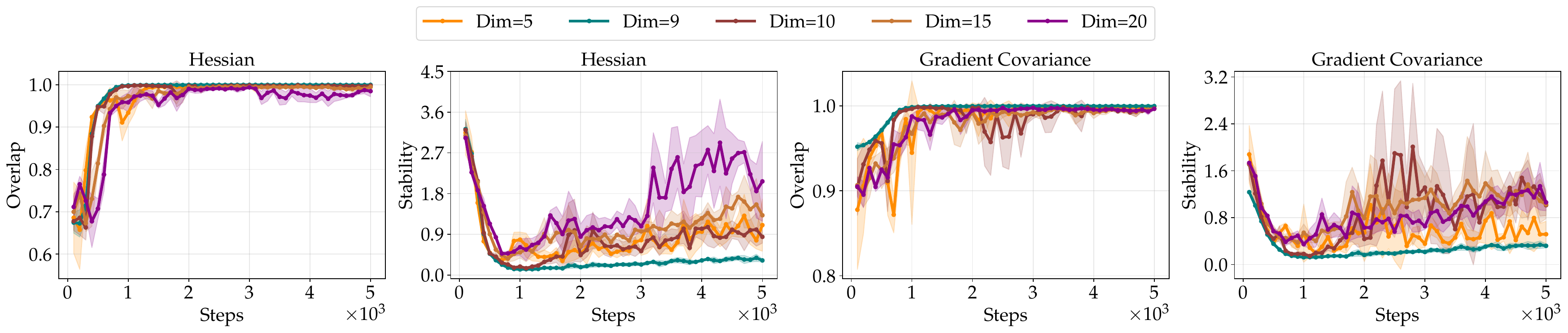}
    \includegraphics[width=\linewidth]{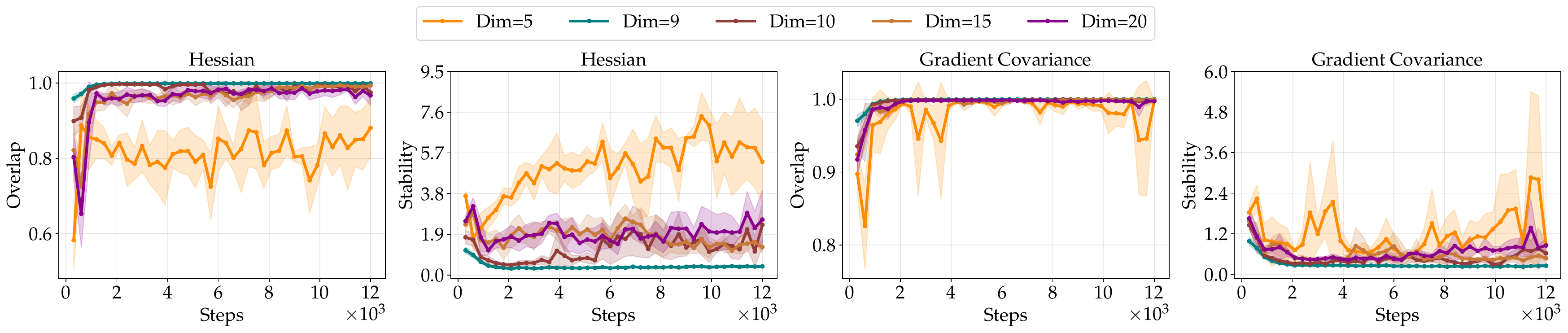}
    \includegraphics[width=\linewidth]{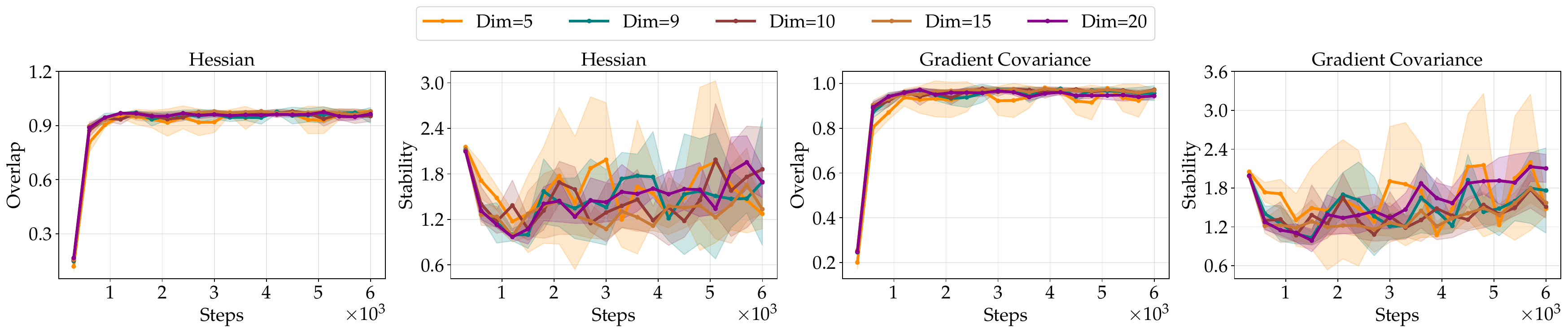}
    \caption{Top-$K$ subspace stabilization of the Hessian and the gradient covariance matrix, with $K \in \{5, 9, 10, 15, 20\}$ shown in different colors. Column 1 and 3 shows the overlap metric, and column 2 and 4 shows the stability metric. \textbf{First row:} MLP on MNIST. \textbf{Second row:} 2-Layer CNN on CIFAR-10.\textbf{Third row:} 3-Layer CNN on CIFAR-10.}
    \label{fig:hess_cov_stab}
\end{figure}
However, we note that this metric is sensitive to the choice of the learning rate. Let $\bm{\theta}$ and $\bm{\theta}'$ be two consecutive checkpoints during model training. Assuming a universal upper bound on the third-order derivative in the region of the training $\norm{\nabla^3_{\bm{\theta}}\mathcal{L}\paren{\bm{\theta}}}_2 \leq L$, we can obtain that
\begin{equation}
    \begin{aligned}
        \text{Overlap}\paren{\mU_K\paren{\bm{\theta}},\mU_K\paren{\bm{\theta}'}} & = 1 - \frac{1}{2K}\norm{\mU_K\paren{\bm{\theta}}\mU_K\paren{\bm{\theta}}^\top - \mU_K\paren{\bm{\theta}'}\mU_K\paren{\bm{\theta}'}^\top}_F^2\\
        & \geq 1 - \frac{\gamma}{2K}\norm{\mH\paren{\bm{\theta}} - \mH\paren{\bm{\theta}'}}_F^2 \geq 1 - \frac{\gamma L^2}{2K}\norm{\bm{\theta}-\bm{\theta}'}_2^2
    \end{aligned}
\end{equation}
where the first inequality is due to Davis-Kahan $\sin\Theta$ Theorem, and $\gamma$ is a quantity that depends on the eigengap of $\mH\paren{\bm{\theta}}$. Since for two consecutive checkpoints we have $\norm{\bm{\theta}-\bm{\theta}'}_2 \propto \eta$, the overlap metric in (\ref{eq:overlap_metric}) can be made arbitrarily close to $1$ by choosing a small enough learning rate. To mitigate the confounding factor of the small learning rate, we define the following instability metric that quantifies the sensitivity to perturbation between two subspaces
\begin{equation}\label{eq:stability_metric}
    \begin{aligned}
        \text{Instability}\paren{\mU_K\paren{\bm{\theta}},\mU_K\paren{\bm{\theta}'}} & := \paren{1 - \frac{1}{K}\norm{\mU_K\paren{\bm{\theta}}^\top\mU_K\paren{\bm{\theta}'}}_F^2}^{\frac{1}{2}} / \norm{\bm{\theta} - \bm{\theta}'}_2\\
        & = \frac{\norm{\mU_K\paren{\bm{\theta}}\mU_K\paren{\bm{\theta}}^\top - \mU_K\paren{\bm{\theta}'}\mU_K\paren{\bm{\theta}'}^\top}_F}{\sqrt{2K}\norm{\bm{\theta} - \bm{\theta}'}_2}
    \end{aligned}
\end{equation}
In particular, our instability metric can be interpreted as the Lipschitzness of the projection operator $\mP\paren{\bm{\theta}} = \mU_K\paren{\bm{\theta}}\mU_K\paren{\bm{\theta}}^\top$ scaled by a factor of $\frac{1}{\sqrt{2K}}$. We note that although the instability metric is not confounded by the small learning rate, it can be made arbitrarily small by a parameter change that grows in magnitude, since $\paren{1 - \frac{1}{K}\norm{\mU_K\paren{\bm{\theta}}^\top\mU_K\paren{\bm{\theta}'}}_F^2}^{\frac{1}{2}} \leq 1$ and thus $\text{Stability}\paren{\mU_K\paren{\bm{\theta}},\mU_K\paren{\bm{\theta}'}} \leq \norm{\bm{\theta} - \bm{\theta}'}_2^{-1}$. In our experiment, we rule out this possibility by plotting both $\text{Overlap}\paren{\mU_K\paren{\bm{\theta}},\mU_K\paren{\bm{\theta}'}}$ and $\text{Stability}\paren{\mU_K\paren{\bm{\theta}},\mU_K\paren{\bm{\theta}'}}$: if we observe an overlap metric $\text{Overlap}\paren{\mU_K\paren{\bm{\theta}},\mU_K\paren{\bm{\theta}'}}$ close to $1$ and a small enough $\text{Stability}\paren{\mU_K\paren{\bm{\theta}},\mU_K\paren{\bm{\theta}'}}$, then we can safely say that the top-$K$ subspace is truly stable.

With the overlap and instability metric in (\ref{eq:overlap_metric}) and (\ref{eq:stability_metric}), we test the subspace stabilization in three settings: $i)$. a two-layer MLP trained on MNIST dataset; $ii)$. a two-layer CNN trained on CIFAR-10 dataset; and $iii).$ a three-layer CNN with batch normalization trained on CIFAR-10 dataset.\footnote{The first two settings follow the standard setting in \cite{song2025does,cohen2021gradient}. Details of the experiments are provided in Appendix~\ref{app:exp_details}.} Focusing on the first two columns of Figure~\ref{fig:hess_cov_stab} that shows the stabilization property of the Hessian, we observe that in the MLP and 2-layer CNN case, pretty much every $K$ in the top-$K$ overlap (except for $K=5$ in the 2-layer CNN settings) achieves a near-perfect overlap. However, only the $K=C-1 = 9$ curve shows a consistently low stability metric. This demonstrates the insufficiently of using the overlap metric alone, which can be complemented by the instability metric we propose. Combining the overlap and instability metric, we observe that in the only the $K = C - 1 = 9$ subspace (in \textcolor{teal}{teal}) demonstrates true stability. This is possibly due to that reducing the dimensionality below $C-1$ fails to account for the rotation inside the top subspace, while increasing the dimensionality above $C-1$ introduces unstable directions. In the third column, our instability metric also discovers a setting where the top subspace fails to reach stabilization, even though the step-wise overlap between consecutive subspace remains close to 1. We will investigate this setting further in later sections. 

\begin{figure}[t!]
    \centering
    \includegraphics[width=0.8\linewidth]{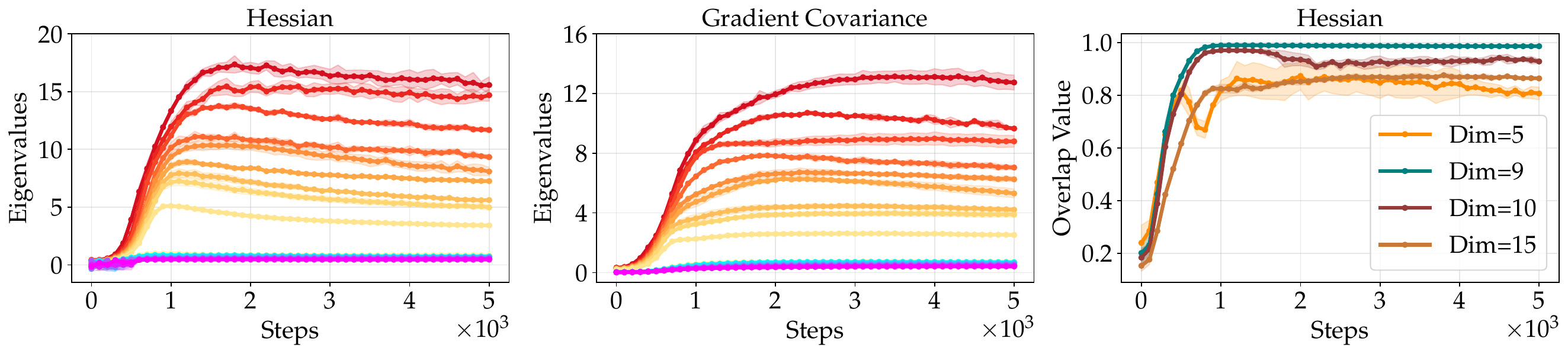}
    \includegraphics[width=0.8\linewidth]{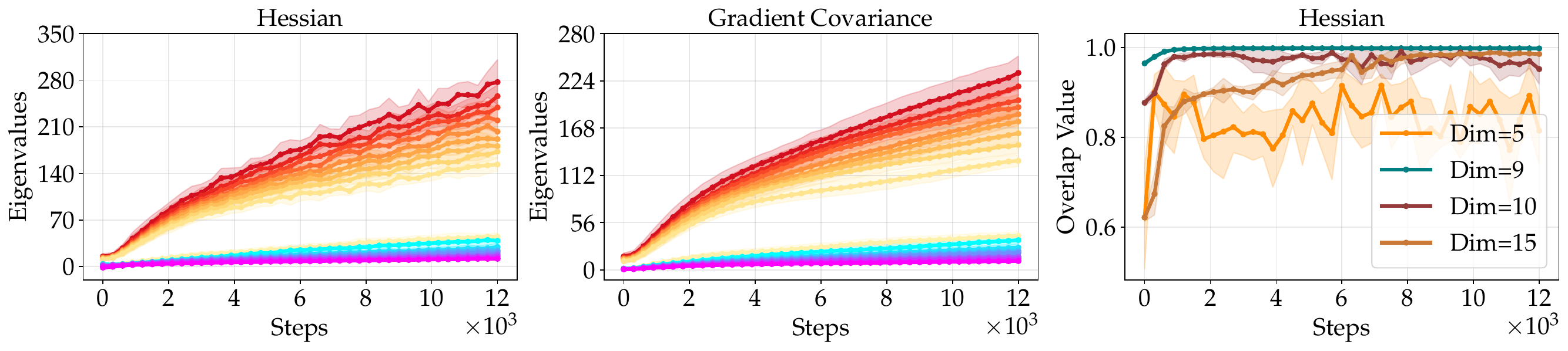}
    \includegraphics[width=0.8\linewidth]{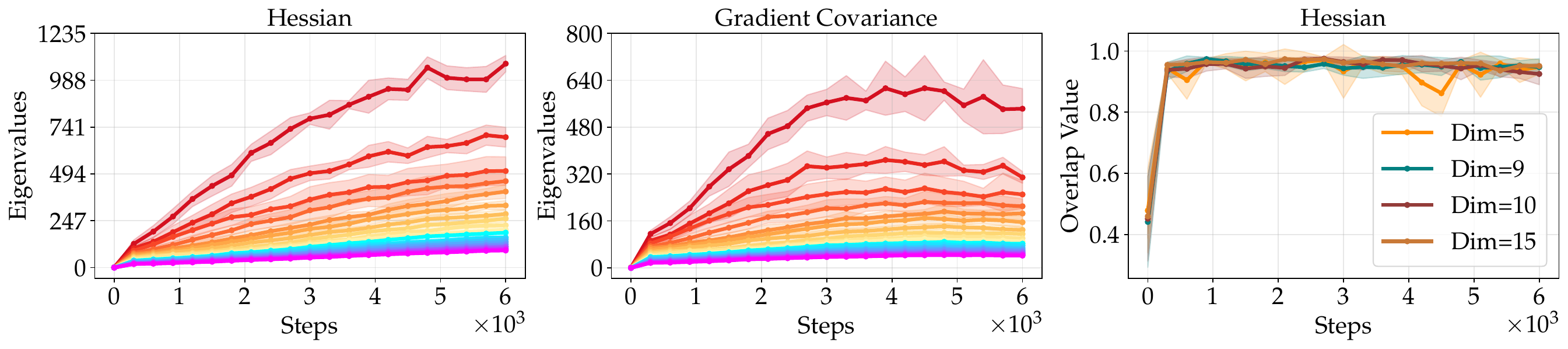}
    \caption{Structural similarity between the Hessian and the gradient covariance matrix. The three columns plots the Hessian eigenvalues, gradient covariance matrix's eigenvalues, and the overlaop between the Hessian and the gradient covariance matrix's top-$K$ subspaces. \textbf{First row:} MLP on MNIST. \textbf{Second row:} 2-Layer CNN on CIFAR-10.\textbf{Third row:} 3-Layer CNN on CIFAR-10.}
    \label{fig:hessian_cov_relationship}
\end{figure}

\section{Structure and Stabilization of Gradient Covariance Matrix}\label{sec:cov_structure_stab}

Before diving into the reasons of the stabilization of top Hessian subspace, we take a detour to explore the stabilization of a related matrix, the gradient covariance matrix. The relationship between the Hessian and the gradient covariance matrix is noted in \cite{zhu2019the}, where it is observed that for classification task, the gradient covariance matrix can be written as the in terms of the empirical Fisher information matrix (FIM), which resembles the Gauss-Newton matrix in the expansion of the Hessian. In particular, the Hessian $\mH\paren{\bm{\theta}}$ can be written in the follow decomposition
\begin{equation}
    \label{eq:hess_decomp}
    \begin{gathered}
    \mH\paren{\bm{\theta}} = \mN\paren{\bm{\theta}} + \mS\paren{\bm{\theta}};\quad \mS\paren{\bm{\theta}} := \frac{1}{n}\sum_{i=1}^n\sum_{c=1}^C\frac{\partial\ell\paren{f\paren{\bm{\theta},\vx_i},y_i}}{\partial f\paren{\bm{\theta},\vx_i}_{c}}\nabla^2_{\bm{\theta}}f\paren{\bm{\theta},\vx_i}_{c}\\
    \mN\paren{\bm{\theta}} := \frac{1}{n}\sum_{i=1}^n\sum_{c_1,c_2=1}^C\frac{\partial^2\ell\paren{f\paren{\bm{\theta},\vx_i},y_i}}{\partial f\paren{\bm{\theta},\vx_i}_{c_1}f\paren{\bm{\theta},\vx_i}_{c_2}}\nabla_{\bm{\theta}}f\paren{\bm{\theta},\vx_i}_{c_1}\nabla_{\bm{\theta}}f\paren{\bm{\theta},\vx_i}_{c_2}^\top
    \end{gathered}
\end{equation}
For classification task with cross-entropy loss, the matrix $\mN\paren{\bm{\theta}}$ is the same as the FIM
\begin{equation}\label{eq:FIM}
    \mF\paren{\bm{\theta}} := \frac{1}{n}\sum_{i=1}^n\mathbb{E}_{y\sim\vp(\bm{\theta},\vx)}\left[\nabla_{\bm{\theta}}\log \vp_{\bm{\theta}}\paren{y\mid \vx_i}\nabla_{\bm{\theta}}\log \vp_{\bm{\theta}}\paren{y\mid \vx_i}^\top\right]
\end{equation}
In particular, it has been observed in \cite{papyan2020tracesclasscrossclassstructurepervade} that the top subspace of $\mH\paren{\bm{\theta}}$ attributes to $\mN\paren{\bm{\theta}}$ (or $\mF\paren{\bm{\theta}}$. The gradient covariance matrix, on the other hand, is often associated with the \textit{empirical} Fisher information matrix $\tilde{\mF}\paren{\bm{\theta}} := \frac{1}{n}\sum_{i=1}^{n}\hat{\vg}_i\paren{\bm{\theta}}\hat{\vg}_i\paren{\bm{\theta}}^\top$\footnote{Here we are restricting the definition to the case of classification task with cross-entropy loss.}. Let $\hat{\vg}_i\paren{\bm{\theta}} = \nabla_{\bm{\theta}}\ell\paren{f\paren{\bm{\theta},\vx_i},y_i}$ denote the per-sample gradient.
Define the gradient covariance matrix
\begin{align*}
    \bm{\Sigma}(\bm{\theta}) & = \frac{1}{n}\sum_{i=1}^{n}\paren{\hat{\vg}_i\paren{\bm{\theta}} - \vg\paren{\bm{\theta}}}\paren{\hat{\vg}_i\paren{\bm{\theta}} - \vg\paren{\bm{\theta}}}^\top = \tilde{\mF}\paren{\bm{\theta}} - \vg\paren{\bm{\theta}}\vg\paren{\bm{\theta}}^\top
\end{align*}
Here $\tilde{\mF}\paren{\bm{\theta}}$ can be seen as $\mF\paren{\bm{\theta}}$ in (\ref{eq:FIM}) with deterministic ground-truth labels $y$. The overlap between $\mF\paren{\bm{\theta}}$ and $\tilde{\mF}\paren{\bm{\theta}}$ explains the overlap between $\bm{\Sigma}\paren{\bm{\theta}}$ and $\mH\paren{\bm{\theta}}$ observed in \cite{zhu2019the}. However, \cite{kunstner2019limitations} argues that the empirical FIM $\tilde{\mF}\paren{\bm{\theta}}$ does not always capture properties of $\mH\paren{\bm{\theta}}$, leaving the question of how much information of $\mH\paren{\bm{\theta}}$ resides in $\bm{\Sigma}\paren{\bm{\theta}}$ open. 

We offer a new finding of the relationship between the Hessian and the gradient covariance matrix from the perspective of their \textit{top subspace}, which has largely been ignored by previous work. From the first two columns of Figure~\ref{fig:hessian_cov_relationship}, we observe that the gradient covariance matrix has a similar eigenvalue profile to the Hessian matrix, with a clear separation of the top-$(C-1)$ outlier and the bulk in the first two rows. In the third column, we measure the overlap between the top-$K$ subspace of the Hessian and the gradient covariance matrix and the Hessian, and observes a consistent near-perfect overlap for the case of $K = C - 1 = 9$. Moreover, in column 3 and 4 in Figure~\ref{fig:hess_cov_stab}, we observe that the top-$K$ subspace of the gradient covariance matrix has a similar stability phenomenon to the top subspace of the Hessian with the most significant stability happens at $K = C - 1$. This suggest that the top-$K$ subspace of the gradient covariance matrix and the Hessian shares a similar.

\subsection{Identifying the Stable Subspace of the Gradient Covariance}
To understand what directs the evolution of the top subspace of the gradient covariance matrix, we take a similar decomposition approach as in \cite{papyan2020tracesclasscrossclassstructurepervade} for the Hessian. In particular, we decompose $\bm{\Sigma}\paren{\bm{\theta}}$ into class-specific gradient covariance matrices $\bm{\Sigma}_c\paren{\bm{\theta}}$. Recall that $C$ denotes the number of classes. Let $\mathcal{I}_c \subseteq [n]$ denote the indices of the samples belonging to class $c$ for $c\in[C]$. Let $n_c = \left|\mathcal{I}_c\right|$ denote the number of samples belonging to class $c$. Denote the class-aggregated gradient $\vg_c\paren{\bm{\theta}}$ as $\vg_c\paren{\bm{\theta}} = \frac{1}{n_c}\sum_{i\in\mathcal{I}_c}\hat{\vg}_i\paren{\bm{\theta}}$ so that $\vg\paren{\bm{\theta}} = \sum_{c=1}^C\frac{n_c}{n}\vg_c\paren{\bm{\theta}}$. We define $\bm{\Sigma}_c\paren{\bm{\theta}}$ as
\begin{equation}\label{eq:cov_c_def}
    \begin{aligned}
        \bm{\Sigma}_c\paren{\bm{\theta}} & = \frac{1}{n_c}\sum_{i\in\mathcal{I}_c}\paren{\hat{\vg}_i\paren{\bm{\theta}} - \vg\paren{\bm{\theta}}}\paren{\hat{\vg}_i\paren{\bm{\theta}} - \vg\paren{\bm{\theta}}}^\top\\
        & = \paren{\vg_c\paren{\bm{\theta}} - \vg\paren{\bm{\theta}}}\paren{\vg_c\paren{\bm{\theta}} - \vg\paren{\bm{\theta}}}^\top + \frac{1}{n_c}\sum_{i\in\mathcal{I}_c}\paren{\hat{\vg}_i\paren{\bm{\theta}} - \vg_c\paren{\bm{\theta}}}\paren{\hat{\vg}_i\paren{\bm{\theta}} - \vg_c\paren{\bm{\theta}}}^\top
    \end{aligned}
\end{equation}
Intuitively, $\bm{\Sigma}_c\paren{\bm{\theta}}$ is the per-class gradient covariance matrix. Then we have that
\begin{gather*}
    \bm{\Sigma}\paren{\bm{\theta}} = \sum_{c=1}^C\frac{n_c}{n}\bm{\Sigma}_c\paren{\bm{\theta}} = \bm{\Sigma}_{\text{top}}\paren{\bm{\theta}} + \bm{\Sigma}_{\text{bulk}}\paren{\bm{\theta}}\\
    \bm{\Sigma}_{\text{top}}\paren{\bm{\theta}} := \sum_{c=1}^C\frac{n_c}{n}\paren{\vg_c\paren{\bm{\theta}} - \vg\paren{\bm{\theta}}}\paren{\vg_c\paren{\bm{\theta}} - \vg\paren{\bm{\theta}}}^\top\\
    \bm{\Sigma}_{\text{bulk}}\paren{\bm{\theta}} := \frac{1}{n}\sum_{c=1}^C\sum_{i\in\mathcal{I}_c}\paren{\hat{\vg}_i\paren{\bm{\theta}} - \vg_c\paren{\bm{\theta}}}\paren{\hat{\vg}_i\paren{\bm{\theta}} - \vg_c\paren{\bm{\theta}}}^\top
\end{gather*}
The decomposition of the $\bm{\Sigma}\paren{\bm{\theta}}$ into $\bm{\Sigma}_{\text{top}}\paren{\bm{\theta}}$ and $\bm{\Sigma}_{\text{top}}\paren{\bm{\theta}}$ can be seen as separating the covariance into a between-class covariance $\bm{\Sigma}_{\text{top}}\paren{\bm{\theta}}$ and an average within class covariance $\bm{\Sigma}_{\text{bulk}}\paren{\bm{\theta}}$. Our hypothesis is that the between-class covariance $\bm{\Sigma}_{\text{top}}\paren{\bm{\theta}}$ explains the top-$(C-1)$ subspace of $\bm{\Sigma}\paren{\bm{\theta}}$ in the setting where the top subspace stabilizes.

To test the hypothesis, we notice that the column/row space of $\bm{\Sigma}_{\text{top}}\paren{\bm{\theta}}$ is spanned by vectors $\left\{\vg_c\paren{\bm{\theta}} - \vg\paren{\bm{\theta}}\right\}_{c=1}^{C}$. Since $\sum_{c=1}^C\frac{n_c}{n}\paren{\vg_c\paren{\bm{\theta}} - \vg\paren{\bm{\theta}}} = 0$, the same subspace must also be the span of $\left\{\vg_c\paren{\bm{\theta}} - \vg\paren{\bm{\theta}}\right\}_{c=1}^{C-1}$. Let $\mG\paren{\bm{\theta}}\in\R^{(C-1)\times p}$ be the matrix such that the $c$th row is $\vg_c\paren{\bm{\theta}} - \vg\paren{\bm{\theta}}$. In the first column of Figure~\ref{fig:G_overlaps}, we show that in the setting where the top-$(C-1)$ subspace of the Hessian and the gradient covariance matrix stabilizes (MLP and 2-layer CNN), the overlap between row space of $\mG\paren{\bm{\theta}}$ and the top-$(C-1)$ subspace of $\bm{\Sigma}\paren{\bm{\theta}}$ stays very close to $1$ throughout training. When the vectors $\left\{\vg_c\paren{\bm{\theta}} - \vg\paren{\bm{\theta}}\right\}_{c=1}^{C-1}$ are linearly independent, $\bm{\Sigma}_{\text{top}}\paren{\bm{\theta}}$ is exactly rank-$(C-1)$. Therefore, the top-$(C-1)$ subspace of $\bm{\Sigma}$ is explicitly identified by $\text{span}\paren{\vg_1\paren{\bm{\theta}} - \vg\paren{\bm{\theta}},\dots\vg_C\paren{\bm{\theta}} - \vg\paren{\bm{\theta}}}$, instead of covered by it. Thus, showing the stabilization of the top subspace of $\bm{\Sigma}\paren{\bm{\theta}}$ directly corresponding to upper bounding the change of $\text{span}\paren{\vg_1\paren{\bm{\theta}} - \vg\paren{\bm{\theta}},\dots\vg_C\paren{\bm{\theta}} - \vg\paren{\bm{\theta}}}$.

\begin{figure}[t!]
    \centering
    \includegraphics[width=0.95\linewidth]{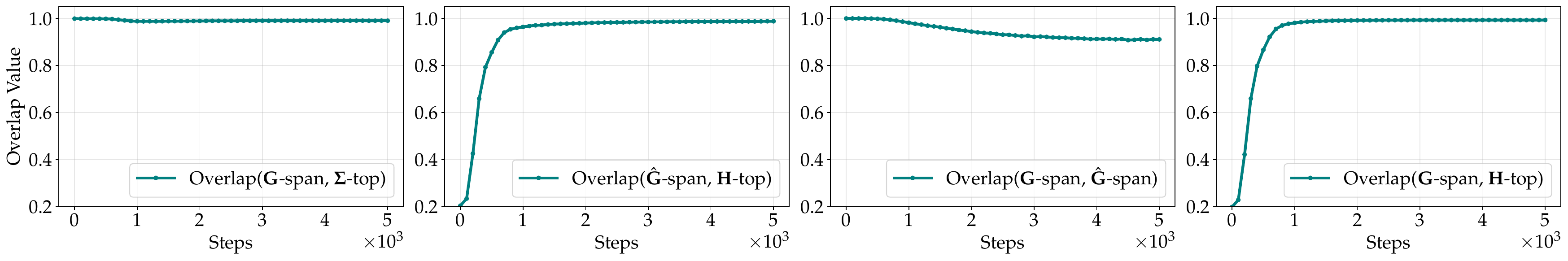}
    \includegraphics[width=0.95\linewidth]{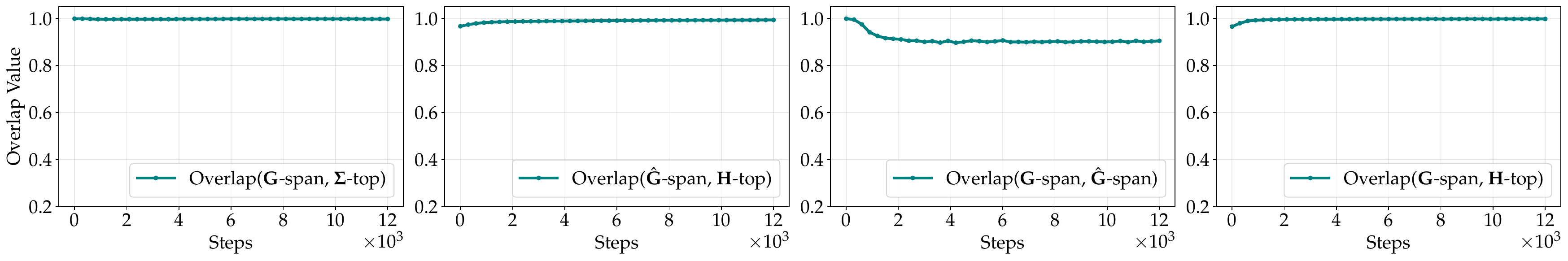}
    \includegraphics[width=0.95\linewidth]{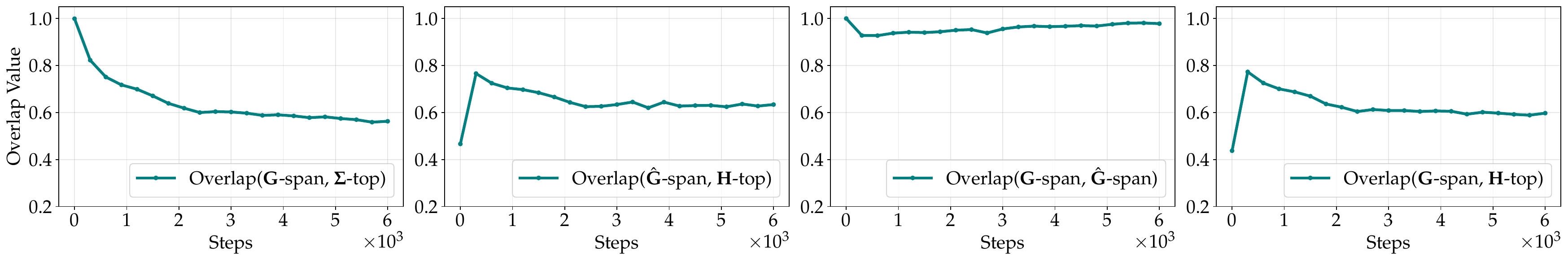}
    \caption{Overlaps between $\mG$-span, $\hat{\mG}$-span, and the top-$(C-1)$ subspace of $\bm{\Sigma}\paren{\bm{\theta}}$ and $\mH\paren{\bm{\theta}}$ throughout training. \textbf{First row:} MLP on MNIST. \textbf{Second row:} 2-Layer CNN on CIFAR-10.\textbf{Third row:} 3-Layer CNN on CIFAR-10.}
    \label{fig:G_overlaps}
\end{figure}

\subsection{Dynamic of Identified Subspace}
Next, we analyze how fast the subspace $\text{span}\paren{\vg_1\paren{\bm{\theta}} - \vg\paren{\bm{\theta}},\dots\vg_C\paren{\bm{\theta}} - \vg\paren{\bm{\theta}}}$ changes throughout training. We assume that at least $C-1$ vectors from $\left\{\vg_c\paren{\bm{\theta}} - \vg\paren{\bm{\theta}}\right\}_{c=1}^{C}$ are linearly independent, which implies that $\text{span}\paren{\vg_1\paren{\bm{\theta}} - \vg\paren{\bm{\theta}},\dots\vg_C\paren{\bm{\theta}} - \vg\paren{\bm{\theta}}}$ is a $(C-1)$-dimensional space. Indeed, this is easy to achieve in overparameterized model with $p \gg C$. To formally understand the change speed of the subspace, we study the projection matrix onto the row space of ${\vg}$, defined by
\begin{equation}\label{eq:subspace_proj}
    \mP\paren{\bm{\theta}} = {\mG\paren{\bm{\theta}}}^\top\paren{{\mG\paren{\bm{\theta}}}{\mG\paren{\bm{\theta}}}^\top}^{-1}{\mG\paren{\bm{\theta}}}\in\R^{p\times p}
\end{equation}
Consider the continuous dynamic of training with gradient flow
\begin{equation}\label{eq:gf}
    \dot{\bm{\theta}}(t) = - \nabla_{\bm{\theta}}\mathcal{L}\paren{\bm{\theta}} = - \vg\paren{\bm{\theta}(t)}
\end{equation}
We let ${\vg}(t) = {\vg}\paren{\bm{\theta}(t)}$ and correspondingly $\mP(t) = \mP(\bm{\theta}(t))$ and $\mG(t) = \mG\paren{\bm{\theta}(t)}$. Showing the stability of the subspace $\text{span}\paren{\vg_1\paren{\bm{\theta}} - \vg\paren{\bm{\theta}},\dots\vg_C\paren{\bm{\theta}} - \vg\paren{\bm{\theta}}}$ translates to bounding $\norm{\dot{\mP}(t)}_F$.
\begin{theorem}\label{thm:subspace_change}
    Let $\mP\paren{t} = \mP\paren{\bm{\theta}(t)}$ be defined in (\ref{eq:subspace_proj}). Consider the parameter dynamic induced by gradient flow on the loss $\mathcal{L}\paren{\bm{\theta}}$, as given in (\ref{eq:gf}). Then we have that
    \[
        \norm{\dot{\mP}(t)}_F^2 \leq \frac{4}{\sigma_{\min}\paren{\mG(t)}^2}\paren{(C-1)\norm{\paren{\mI - \mP(t)}\mH(t)\vg(t)}_2^2 + \sum_{c=1}^{C-1}\norm{\paren{\mI-\mP(t)}\mH_c(t)\vg(t)}_2^2}
    \]
    where $\mH(t) = \mH\paren{\bm{\theta}(t)}$ and $\mH_c(t)$ is the Hessian restricted to class-$c$ samples at $\bm{\theta}(t)$
    \[  
        \mH_c(t) := \frac{1}{n_c}\sum_{i=1}^{n_c}\nabla^2_{\bm{\theta}}\ell\paren{f\paren{\bm{\theta},\vx_{i,c}},y_c},\text{ s.t. } \mH(t) = \sum_{c=1}^C\frac{n_c}{n}\mH_c(t)
    \]
\end{theorem}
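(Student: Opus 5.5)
Differentiating the projection in (\ref{eq:subspace_proj}) gives the standard formula for the derivative of an orthogonal projector onto a row space. Writing $\mP = \mG^\top(\mG\mG^\top)^{-1}\mG$ and differentiating in $t$, a direct computation collapses all terms into
\[
    \dot{\mP} = (\mI-\mP)\dot{\mG}^\top(\mG\mG^\top)^{-1}\mG + \mG^\top(\mG\mG^\top)^{-1}\dot{\mG}(\mI-\mP).
\]
Equivalently, $\dot{\mP} = \mA + \mA^\top$ with $\mA := (\mI-\mP)\dot{\mG}^\top(\mG\mG^\top)^{-1}\mG$. This uses $\mG\mP = \mG$ and the identity for the derivative of $(\mG\mG^\top)^{-1}$; the terms of the form $\mP\dot{\mG}^\top(\cdot)\mG$ cancel against the derivative of the inverse. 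Since $\mA^\top \mA$ lives on $\text{span}(\mG)$ and $\mA\mA^\top$ lives on its complement, the cross terms $\Tr(\mA\mA) = 0$. Hence $\norm{\dot{\mP}}_F^2 = 2\norm{\mA}_F^2$. We then bound
\[
    \norm{\mA}_F \le \norm{(\mI-\mP)\dot{\mG}^\top}_F\,\norm{(\mG\mG^\top)^{-1}\mG}_2 = \norm{(\mI-\mP)\dot{\mG}^\top}_F/\sigma_{\min}(\mG),
\]
which yields $\norm{\dot{\mP}}_F^2 \le \frac{2}{\sigma_{\min}(\mG)^2}\norm{(\mI-\mP)\dot{\mG}^\top}_F^2$.

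Next we compute $\dot{\mG}$ using the gradient flow (\ref{eq:gf}). The $c$th row of $\mG$ is $\vg_c - \vg$. Its time derivative is $(\nabla_{\bm{\theta}}\vg_c - \nabla_{\bm{\theta}}\vg)\dot{\bm{\theta}} = -(\mH_c - \mH)\vg$, since $\vg_c$ is the gradient of the class-$c$ average loss, whose Hessian is exactly $\mH_c$. Therefore
\[
    \norm{(\mI-\mP)\dot{\mG}^\top}_F^2 = \sum_{c=1}^{C-1}\norm{(\mI-\mP)(\mH_c-\mH)\vg}_2^2 \le 2\sum_{c=1}^{C-1}\paren{\norm{(\mI-\mP)\mH_c\vg}_2^2 + \norm{(\mI-\mP)\mH\vg}_2^2},
\]
by $(a+b)^2\le 2a^2+2b^2$. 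The second sum contributes $(C-1)\norm{(\mI-\mP)\mH\vg}_2^2$. Combining with the factor $2/\sigma_{\min}(\mG)^2$ gives exactly the constant $4$ in the statement.

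The main obstacle is the first step: deriving the compact form of $\dot{\mP}$ cleanly and verifying the orthogonality that gives $\norm{\dot{\mP}}_F^2 = 2\norm{\mA}_F^2$. If that is messy, a fallback is to differentiate $\mP^2=\mP$. This gives $\dot{\mP} = \mP\dot{\mP}(\mI-\mP) + (\mI-\mP)\dot{\mP}\mP$, and it suffices to identify $(\mI-\mP)\dot{\mP}\mP = (\mI-\mP)\dot{\mG}^\top(\mG\mG^\top)^{-1}\mG$. That identity follows by differentiating $\mP\mG^\top = \mG^\top$, which gives $(\mI-\mP)\dot{\mG}^\top = \dot{\mP}\mG^\top$, and then right-multiplying by $(\mG\mG^\top)^{-1}\mG$. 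We also need the operator-norm fact $\norm{(\mG\mG^\top)^{-1}\mG}_2 = 1/\sigma_{\min}(\mG)$, which holds by the SVD of $\mG$ under the full-rank assumption.
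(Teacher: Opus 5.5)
Your proof is correct and shares the paper's skeleton. Both write $\dot{\mP} = \mA + \mA^\top$ with $\mA = \paren{\mI-\mP}\dot{\mG}^\top\paren{\mG\mG^\top}^{-1}\mG$ (the paper's $\mX$), use $\norm{\paren{\mG\mG^\top}^{-1}\mG}_2 = 1/\sigma_{\min}\paren{\mG}$, and use $\frac{d}{dt}\paren{\vg_c - \vg} = -\paren{\mH_c - \mH}\vg$ along gradient flow.

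You differ in how the two factors of $2$ are handled, and your version is the sound one.

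\textbf{The paper's route.} It uses the triangle inequality $\norm{\dot{\mP}}_F \le 2\norm{\mA}_F$, which spends the full constant $4$ once squared. It then asserts $\norm{\paren{\mI-\mP}\paren{\mH-\mH_c}\vg}_2^2 \le \norm{\paren{\mI-\mP}\mH\vg}_2^2 + \norm{\paren{\mI-\mP}\mH_c\vg}_2^2$. This fails in general: if the two projected vectors are antiparallel, the left side is twice the right. Repaired with $\norm{\va-\vb}_2^2 \le 2\norm{\va}_2^2 + 2\norm{\vb}_2^2$, the paper's chain only yields the constant $8$.

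\textbf{Your route.} You use that $\mA = \paren{\mI-\mP}\mA\mP$, so $\mA^2 = 0$ and
$\norm{\mA+\mA^\top}_F^2 = 2\norm{\mA}_F^2 + 2\Tr\paren{\mA^2} = 2\norm{\mA}_F^2$
holds exactly. This identity saves exactly the factor that the correct elementary inequality then spends. Your argument therefore proves the theorem with the stated constant $4$.

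Your justification via the supports of $\mA^\top\mA$ and $\mA\mA^\top$ is equivalent to $\mA = \paren{\mI-\mP}\mA\mP$. Stating that identity directly makes $\Tr\paren{\mA^2}=0$ a one-line check.

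Your fallback derivation from $\mP^2=\mP$ and $\mP\mG^\top=\mG^\top$ is also valid. It avoids differentiating the inverse altogether.
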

Theorem~\ref{thm:subspace_change} states that the top subspace change scales with the norm of the projected Hessian-gradient product $\paren{\mI - \mP(t)}\mH(t)\vg(t)$ and $\paren{\mI - \mP(t)}\mH_c(t)\vg(t)$. As a naive simplification, we have $\norm{\paren{\mI - \mP(t)}\mH(t)\vg(t)}_2 \propto \norm{\paren{\mI - \mP(t)}\mH(t)}_2$ and $\norm{\paren{\mI - \mP(t)}\mH_c(t)\vg(t)}_2 \propto \norm{\paren{\mI - \mP(t)}\mH_c(t)}_2$. In the theorem below, we show that $\norm{\paren{\mI - \mP(t)}\mH_c(t)}_2$ can we well controlled by the fully projected Hessian $\norm{\paren{\mI - \mP(t)}\mH(t)\paren{\mI - \mP(t)}}_2$.
\begin{theorem}\label{thm:hessian_proj} 
    Fix any $t\geq 0$ and let $\mH(t) = \mH, \mH_c(t) = \mH_c$, $\mP(t) = \mP$. Assume that $\lambda_{\min}\paren{\mH_c}\geq -\nu$ for all $c\in[C]$. Let $\varepsilon \geq \norm{\paren{\mI - \mP}\mH\paren{\mI - \mP}}_2$. Then we have that $\lambda_{\min}\paren{\mH}\geq -\nu$, and
    \[
        \norm{\paren{\mI - \mP}\mH}_2 \leq 2\sqrt{\paren{\varepsilon + \nu}\paren{\norm{\mH}_2 + \nu}};\quad \norm{\paren{\mI - \mP}\mH_c}_2 \leq \frac{2n}{n_c}\sqrt{\paren{\varepsilon + \nu}\paren{\norm{\mH}_2 + \nu}}
    \]
\end{theorem}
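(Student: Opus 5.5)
We prove Theorem~\ref{thm:hessian_proj} by shifting everything to positive semidefinite matrices and applying a Cauchy--Schwarz-type inequality for PSD matrices. Write $\mP^\perp := \mI - \mP$, which is an orthogonal projection with $\norm{\mP^\perp}_2\le 1$.

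\textbf{Step 1: lower bound on $\mH$.} Since $\mH = \sum_{c=1}^C\frac{n_c}{n}\mH_c$ with weights summing to one and each $\mH_c \succeq -\nu\mI$, we get $\mH\succeq -\nu\mI$ immediately. Hence $\lambda_{\min}(\mH)\geq-\nu$.

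\textbf{Step 2: bound for $\mP^\perp\mH$.} Define $\mA := \mH + \nu\mI \succeq 0$. The key inequality is that for a PSD matrix $\mA = \mA^{1/2}\mA^{1/2}$,
\[
\norm{\mP^\perp\mA}_2 = \norm{(\mP^\perp\mA^{1/2})\mA^{1/2}}_2 \le \norm{\mP^\perp\mA^{1/2}}_2\norm{\mA^{1/2}}_2 = \sqrt{\norm{\mP^\perp\mA\mP^\perp}_2\,\norm{\mA}_2}.
\]
We then bound the two factors:
\begin{itemize}
\item $\norm{\mP^\perp\mA\mP^\perp}_2 = \norm{\mP^\perp\mH\mP^\perp + \nu\mP^\perp}_2 \le \varepsilon+\nu$;
\item $\norm{\mA}_2 \le \norm{\mH}_2+\nu$.
\end{itemize}
Finally, $\mP^\perp\mH = \mP^\perp\mA - \nu\mP^\perp$, so $\norm{\mP^\perp\mH}_2 \le \sqrt{(\varepsilon+\nu)(\norm{\mH}_2+\nu)} + \nu$. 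Since $\nu \le \sqrt{(\varepsilon+\nu)(\norm{\mH}_2+\nu)}$ (both factors are at least $\nu$, as $\varepsilon\ge 0$), the total is at most $2\sqrt{(\varepsilon+\nu)(\norm{\mH}_2+\nu)}$, which gives the constant $2$.

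\textbf{Step 3: per-class bound.} Set $\mA_c := \mH_c+\nu\mI\succeq 0$. Because $\sum_c \frac{n_c}{n}\mA_c = \mA$ and every summand is PSD, we have $\frac{n_c}{n}\mA_c \preceq \mA$. Two consequences follow:
\begin{itemize}
\item compressing by $\mP^\perp$ preserves the Loewner order, so $\norm{\mP^\perp\mA_c\mP^\perp}_2 \le \frac{n}{n_c}(\varepsilon+\nu)$;
\item $\norm{\mA_c}_2\le \frac{n}{n_c}(\norm{\mH}_2+\nu)$.
\end{itemize}
Applying the same square-root inequality to $\mA_c$ gives $\norm{\mP^\perp\mA_c}_2\le \frac{n}{n_c}\sqrt{(\varepsilon+\nu)(\norm{\mH}_2+\nu)}$. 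Then $\norm{\mP^\perp\mH_c}_2\le \norm{\mP^\perp\mA_c}_2 + \nu$, and using $\nu\le \frac{n}{n_c}\sqrt{\cdot}$ we obtain the stated bound $\frac{2n}{n_c}\sqrt{(\varepsilon+\nu)(\norm{\mH}_2+\nu)}$.

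\textbf{Main obstacle.} The crucial step is controlling $\mP^\perp\mH_c$ using only the projected \emph{total} Hessian. This works because of PSD domination: after the $\nu$-shift, each term $\frac{n_c}{n}\mA_c$ of a PSD sum is dominated by the sum $\mA$. Without the lower bound $\lambda_{\min}(\mH_c)\ge-\nu$ this domination fails, and that is exactly why the assumption is needed. One small point to check is that shifting by $\nu$ still lets the square-root factorization give a bound in terms of $\norm{\mH}_2$ rather than $\norm{\mA}_2$; this is handled by the elementary inequality $\norm{\mA}_2\le\norm{\mH}_2+\nu$.
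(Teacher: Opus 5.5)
Your proof is correct. It shares the paper's skeleton: shift by $\nu$ to get the PSD matrices $\mA=\mH+\nu\mI$ and $\mH_c+\nu\mI$, use the Loewner domination $\frac{n_c}{n}(\mH_c+\nu\mI)\preceq\mA$ for the per-class bound, and control the coupling with a Cauchy--Schwarz inequality through the PSD square root. The difference is where that inequality is applied.

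The paper splits $\paren{\mI-\mP}\mH=\paren{\mI-\mP}\mH\paren{\mI-\mP}+\paren{\mI-\mP}\mH\mP$ and applies Lemma~\ref{lem:matrix_cauchy} only to the off-diagonal block. This leaves an extra $\varepsilon+\nu$ term, and for $\mH_c$ a term $\frac{n}{n_c}(\varepsilon+\nu)$ plus the shift $\nu$. To absorb these into $2\sqrt{(\varepsilon+\nu)(\norm{\mH}_2+\nu)}$, the paper invokes $\varepsilon\le\norm{\mH}_2$. That is not among the hypotheses, since $\varepsilon$ is an arbitrary upper bound. The gap is harmless, because the claim is trivial when $\varepsilon>\norm{\mH}_2$, but the paper does not treat that case. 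In the per-class chain the paper also silently drops the extra $+\nu$.

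Your inequality $\norm{\paren{\mI-\mP}\mA}_2\le\sqrt{\norm{\paren{\mI-\mP}\mA\paren{\mI-\mP}}_2\norm{\mA}_2}$ is exactly Lemma~\ref{lem:matrix_cauchy} with the right-hand factor taken to be $\mI$ instead of $\mP$. It therefore bounds the whole row block at once, and the only leftover is the shift $\nu$. You absorb that using just $\varepsilon\ge 0$ and $n/n_c\ge 1$. So your route gives the stated constants rigorously, with the sharper intermediate bounds $\sqrt{\cdot}+\nu$ and $\frac{n}{n_c}\sqrt{\cdot}+\nu$, and needs no case distinction. The paper's block split only makes the bulk/top coupling term explicit; it buys nothing for the bound.

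Both arguments tacitly take $\nu\ge 0$, for instance in $\norm{\mA}_2\le\norm{\mH}_2+\nu$.
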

Theorem~\ref{thm:hessian_proj} relates the operator norm of $(\mI - \mP)\mH$ and $(\mI - \mP)\mH_c$ to $\varepsilon$, the upper bound on the Hessian's operator norm when projected onto $\mI - \mP$, which roughly corresponds to the operator norm of the Hessian in the bulk subspace. Recall that $\mP(t)$ predicts the top-$(C-1)$ subspace of $\bm{\Sigma}\paren{\bm{\theta}}$, which coincides with the top-$(C-1)$ subspace of the Hessian. Therefore, if  $\mH_c(t)$ shares a simlar subspace to $\mH(t)$, then $(\mI - \mP(t))\mH(t)$ and $(\mI - \mP(t))\mH_c(t)$ should correspond roughly to the bulk subspace of the Hessian, which is significantly smaller than the top-$(C-1)$ eigenvalues of $\mH(t)$. In the second column of Figure~\ref{fig:norm_scales}, we plot $\norm{\paren{\mI - \mP}\mH_c}_2 / \norm{\mH_c}_2$ for $c\in[C]$ in color spectrum \textcolor{yellow}{yellow} to \textcolor{red}{red}, and $\norm{\paren{\mI - \mP}\mH}_2 / \norm{\mH}_2$ in \textcolor{teal}{teal}. We observe that these ratios are consistently small after the initial period of training.

In the meantime, we point out that simply measuring $\norm{\paren{\mI - \mP}\mH}_2$ may not result in a tight bound, since $\norm{\paren{\mI - \mP}\mH\vg}_2$ and $\norm{\paren{\mI - \mP}\mH_c\vg}_2$ also involves the alignment between the gradient and the Hessian outside the top subspace. Therefore, in the third column of Figure~\ref{fig:norm_scales} we also directly measured the norms $\norm{\paren{\mI - \mP}\mH_c\vg}_2$ in the color spectrum \textcolor{yellow}{yellow} to \textcolor{red}{red} and $\norm{\paren{\mI - \mP}\mH\vg}_2$ in \textcolor{teal}{teal}, and compare directly against $\sigma_{\min}\paren{\mG(t)}$ in \textcolor{darkmagenta}{purple}. We observe that $\sigma_{\min}\paren{\mG(t)}$ is much larger in magnitude than $\norm{\paren{\mI - \mP}\mH_c\vg}_2$ and $\norm{\paren{\mI - \mP}\mH\vg}_2$.

\begin{figure}[t!]
    \centering
    \includegraphics[width=0.3\linewidth]{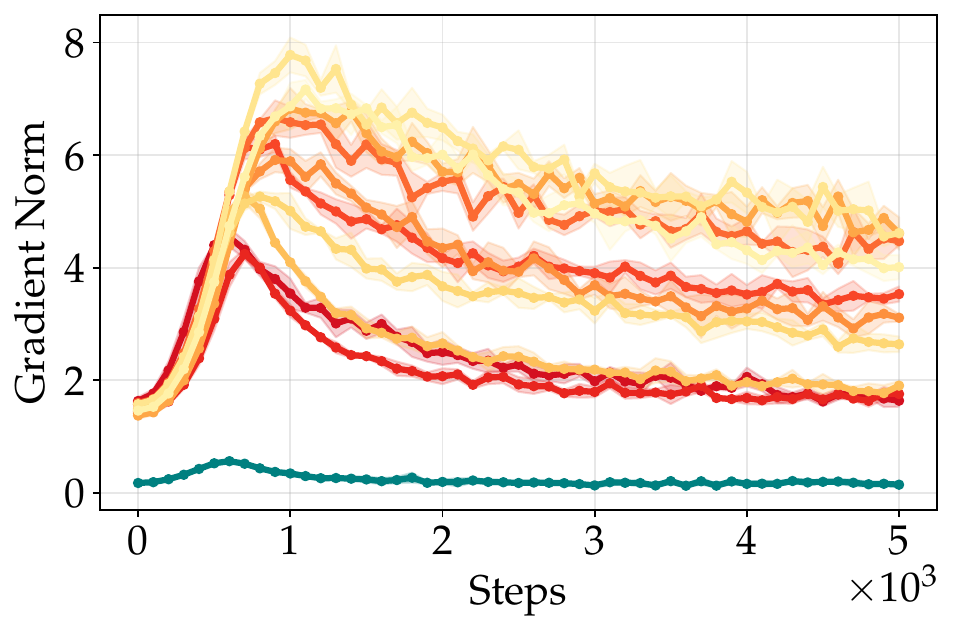}
    \includegraphics[width=0.6\linewidth]{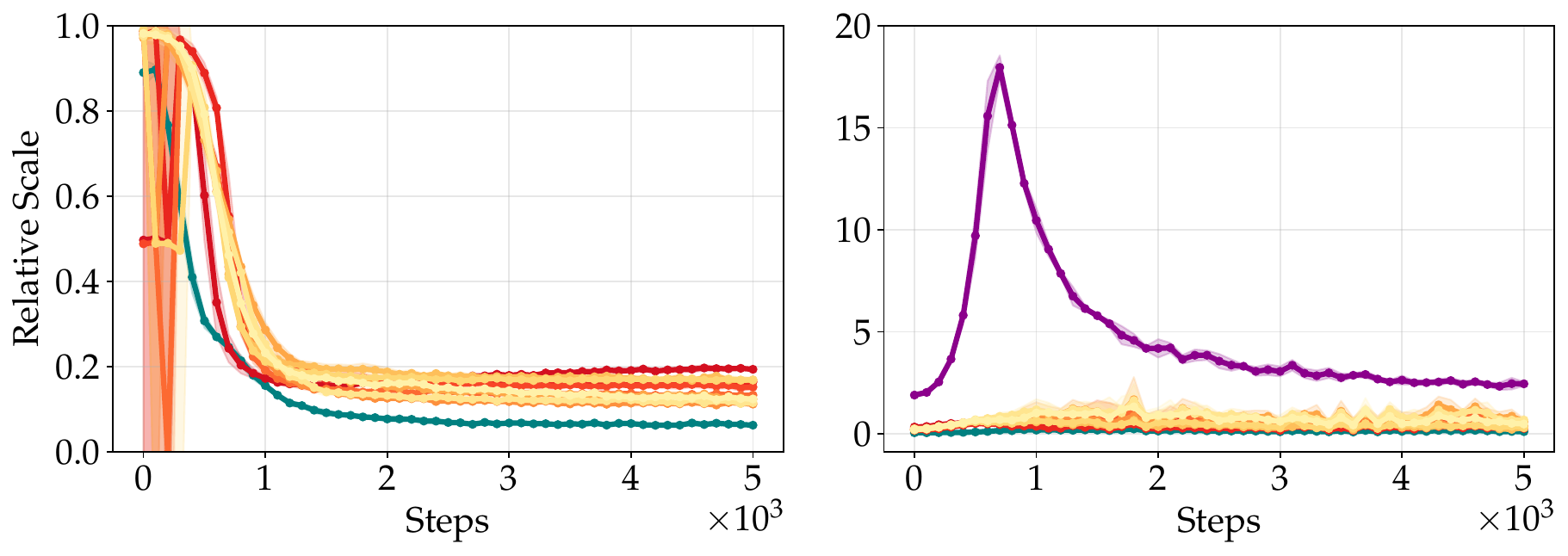}
    \includegraphics[width=0.3\linewidth]{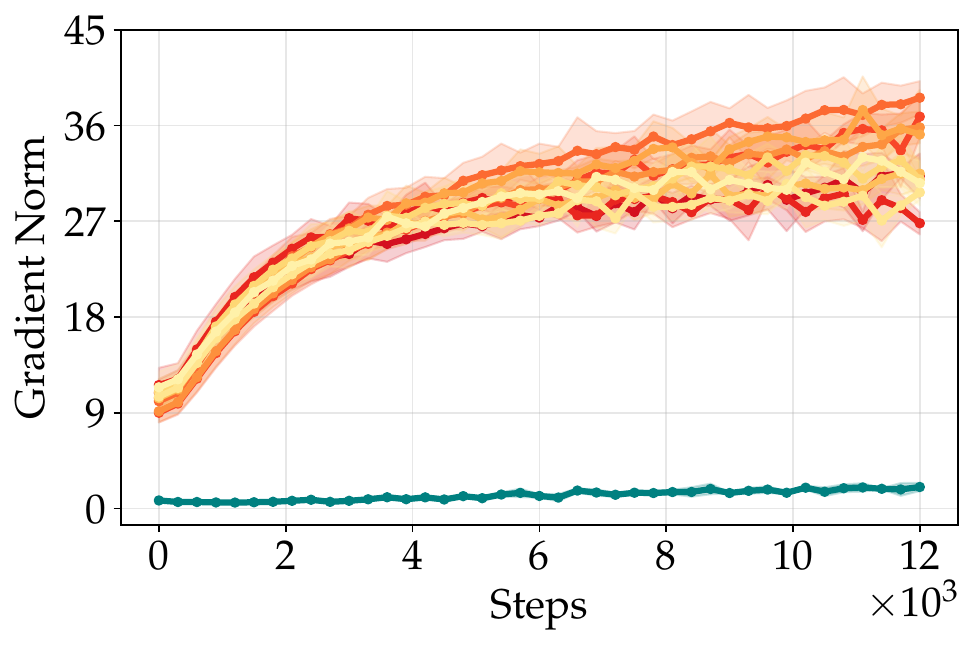}
    \includegraphics[width=0.6\linewidth]{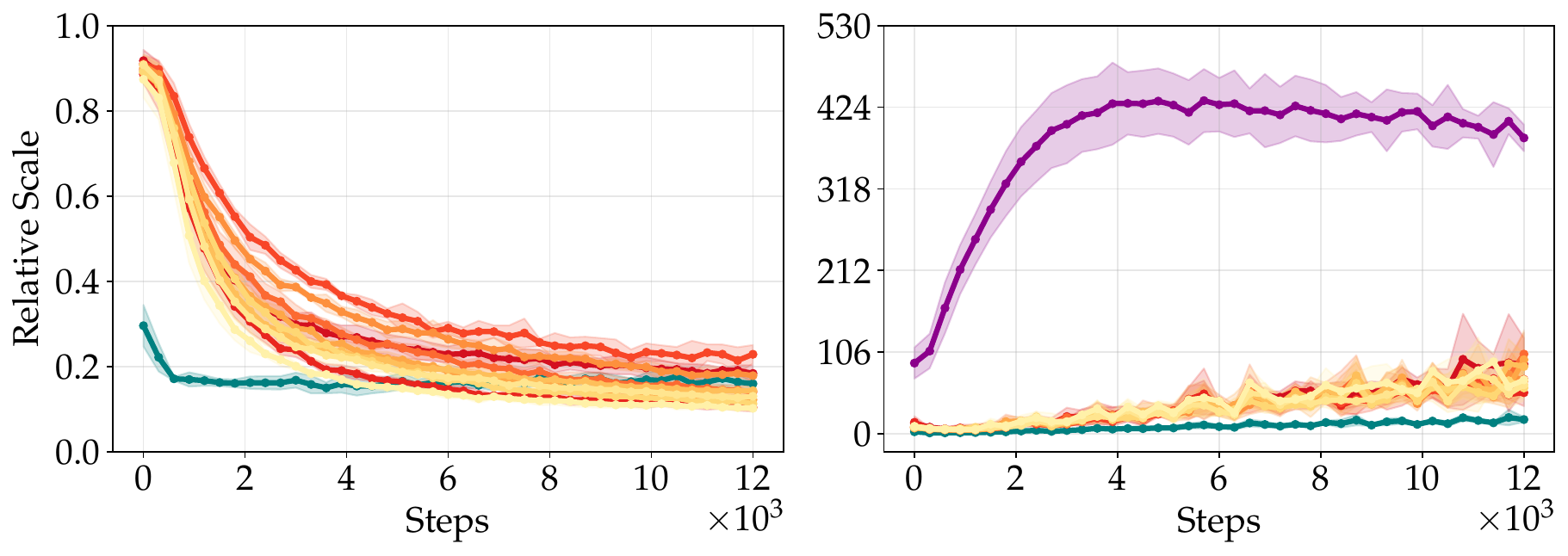}
    \caption{\textbf{Left column:} norm comparisons between $\vg_c(\bm{\theta})$s (\textcolor{yellow}{yellow} to \textcolor{red}{red} for $c\in[C]$) and $\vg\paren{\bm{\theta}}$ (\textcolor{teal}{teal}). \textbf{Mid column:} $\norm{\paren{\mI - \mP}\mH_c}_2 / \norm{\mH_c}_2$ for $c\in[C]$ (\textcolor{yellow}{yellow} to \textcolor{red}{red}), and $\norm{\paren{\mI - \mP}\mH}_2 / \norm{\mH}_2$ (\textcolor{teal}{teal}). \textbf{Right column:} comparison of $\norm{\paren{\mI - \mP}\mH_c\vg}_2$ (\textcolor{yellow}{yellow} to \textcolor{red}{red}) and $\norm{\paren{\mI - \mP}\mH\vg}_2$ (\textcolor{teal}{teal}) against $\sigma_{\min}\paren{\mG(t)}$ (\textcolor{darkmagenta}{purple}). The first and second row correspond to the setting of the first and second row in Figure~\ref{fig:hess_cov_stab}.}
    \label{fig:norm_scales}
\end{figure}

\section{Stabilization of the Hessian Top Subspace}
From the perspective of the gradient covariance matrix, Section~\ref{sec:cov_structure_stab} shows that $i)$. the top-$(C-1)$ subspace of the gradient covariance matrix has a near-perfect alignment with the top-$(C-1)$ subspace of the Hessian; $ii).$ the top-$(C-1)$ subspace of the gradient covariance matrix can be identified by $\text{span}\paren{\vg_1(\bm{\theta}) - \vg(\bm{\theta}),\dots \vg_C(\bm{\theta}) - \vg(\bm{\theta})}$; and $iii).$ the subspace $\text{span}\paren{\vg_1(\bm{\theta}) - \vg(\bm{\theta}),\dots \vg_C(\bm{\theta}) - \vg(\bm{\theta})}$ changes slowly when bulk subspace of the Hessian and the per-class Hessian are small. Therefore, it it reasonable to conclude that $\text{span}\paren{\vg_1(\bm{\theta}) - \vg(\bm{\theta}),\dots \vg_C(\bm{\theta}) - \vg(\bm{\theta})}$ serves also as an identifier for the top-$(C-1)$ subspace of the Hessian, and adopt the reason of its slow change as an explanation of the stabilization of the top-$(C-1)$ subspace of the Hessian. In this section, we provide an explanation of why $\text{span}\paren{\vg_1(\bm{\theta}) - \vg(\bm{\theta}),\dots \vg_C(\bm{\theta}) - \vg(\bm{\theta})}$ should be an identifier of the top-$(C-1)$ subspace of the Hessian based on the cross-class structure argument in \cite{papyan2020tracesclasscrossclassstructurepervade}.

\subsection{Connecting to the Subspace Identifier from \cite{papyan2020tracesclasscrossclassstructurepervade}}
\cite{papyan2020tracesclasscrossclassstructurepervade}studies the outlier subspace using a classification setup similar to our case. To ease the explanation, we use a slightly different notation from previous sections.  In particular, for the following discussion we fix $\bm{\theta}$ and omit the specification of e.g. $\vg(\bm{\theta})$ to endorse the simplified notation of $\vg$. Let $\vx_{i,c}$ denote the $i$th input sample from $c$th class, and $y_c$ be the indicator of the $c$th class. Define
\[
    \tilde{\vg}_{i,c,c'} := \nabla_{\bm{\theta}}\ell\paren{f\paren{\bm{\theta},\vx_{i,c}},y_{c'}}
\]
as the per-sample \textit{cross-class} gradient, i.e., the gradient of $\vx_{i,c}$ computed bsaed on a potentially incorrect label $y_{c'}$. Then, let $\omega_{i,c,c'} = \frac{p_{i,c,c'}}{n_cC}$ with $p_{i,c,c'} = \vp\paren{\bm{\theta},\vx_{i,c}}_{c'}$. Define auxiliary variables
\begin{equation}
    \label{eq:papyan_def}
    \begin{gathered}
        \tilde{\vg}_{c,c'} := \sum_{i\in\mathcal{I}_c}\pi_{i,c,c'}\tilde{\vg}_{i,c,c'};\quad \pi_{i,c,c'} = \frac{\omega_{i,c,c'}}{\omega_{c,c'}};\quad \omega_{c,c'} = \sum_{i\in\mathcal{I}_c}\omega_{i,c,c'}\\
        \tilde{\vg}_c := \sum_{c\neq c'}\pi_{c,c'}\vg_{c,c'};\quad \pi_{c,c'} = \frac{\omega_{c,c'}}{\omega_c};\quad \omega_c = \sum_{c\neq c'}\omega_{c,c'}
    \end{gathered}
\end{equation}
Intuitively, $\tilde{\vg}_{c,c'}$ and $\tilde{\vg}_{c}$ can be seen as a gradual weighted aggregation first over samples of each class and over cross-classes. \cite{papyan2020tracesclasscrossclassstructurepervade} shows that the top-$C$ eigenvector subspace of the Hessian attributes to the rank-$C$ matrix $\hat{\mG}:=\sum_{c=1}^C\omega_c\tilde{\vg}_c\tilde{\vg}_c^\top$. Therefore, we should also expect the row/column span of $\hat{\mG}$, namely $\text{span}(\hat{\mG})$, to cover the top-$(C-1)$ subspace of the Hessian. In cases where the top-$(C-1)$ subspaces stabilizes (first and second row of Figure~\ref{fig:G_overlaps}), it can be verified that $\hat{\mG}$ covers the top-$(C-1)$ subspace of the Hessian. However, there are two issue with this attribution:
\begin{enumerate}
    \item The directions $\vg_c'$ involves complicated definition that is not easy to interpret and analyze.
    \item The subspace $\text{span}(\hat{\mG})$ has is a $C$-dimensional subspace. Although it has been shown that $\text{span}(\hat{\mG})$ covers the subspace spanned by the top-$(C-1)$ eigenvectors of the Hessian, the dimensionality mismatch makes the characterization of the top Hessian subspace difficulty.
\end{enumerate}
\textbf{Connect to our identifier.}
Nevertheless, $\text{span}(\hat{\mG})$ provides a good starting point to be connected to our identifier $\text{span}\paren{\vg_1(\bm{\theta}) - \vg(\bm{\theta}),\dots \vg_C(\bm{\theta}) - \vg(\bm{\theta})}$. To see the connection, we start by simplifying the definition of $\vg_c$ to an equivalent set of vectors $\vg_c'$, justified by the theorem below.
\begin{theorem}
    \label{thm:simplify_g_prime}
    Let $\hat{\mG} := \sum_{c=1}^C\omega_c\tilde{\vg}_c\tilde{\vg}_c^\top$ with $\omega_c$ and $\tilde{\vg}_c$ defined in (\ref{eq:papyan_def}). Let $\vg_c' = \frac{1}{n_c}\sum_{i\in\mathcal{I}_c}\vp\paren{\bm{\theta},\vx_{i}}_c\hat{\vg}_i\paren{\bm{\theta}}$ be the aggregated gradients of samples from class-$c$, weighted by the softmax probability $\vp\paren{\bm{\theta},\vx_{i,c}}_c$ for $c\in[C]$. If $\left\{\vg_c\right\}_{c=1}^C$ is an linearly independent list, then we have that $\text{span}\paren{\vg_{\text{class}}} = \text{span}\paren{\vg_1',\dots,\vg_C'}$.
\end{theorem}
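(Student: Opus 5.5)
The plan is to show that each of Papyan's aggregated vectors $\tilde{\vg}_c$ is a nonzero scalar multiple of $\vg_c'$. The span identity then follows from the fact that $\hat{\mG}$ is a positively weighted sum of rank-one PSD matrices. I read $\text{span}(\vg_{\text{class}})$ in the statement as $\text{span}(\hat{\mG})$, equivalently $\text{span}(\tilde{\vg}_1,\dots,\tilde{\vg}_C)$. I read the independence hypothesis as applying to the class-level vectors, and it will only be used to conclude that the common span has dimension $C$.

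\textbf{Step 1: cross-entropy gradient identity.} Write $\mathbf{J}_{i}$ for the Jacobian $\nabla_{\bm{\theta}} f(\bm{\theta},\vx_{i,c})$ and $\vp_i=\vp(\bm{\theta},\vx_{i,c})$. For cross-entropy on logits we have $\nabla_{\bm{\theta}}\ell(f(\bm{\theta},\vx_{i,c}),y_{c'}) = \mathbf{J}_i^\top(\vp_i-\ve_{c'})$. Hence $\tilde{\vg}_{i,c,c'}=\mathbf{J}_i^\top(\vp_i-\ve_{c'})$, and the true per-sample gradient is $\hat{\vg}_i=\mathbf{J}_i^\top(\vp_i-\ve_c)$.

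\textbf{Step 2: unfold the nested weighted averages.} The normalizations $\pi_{i,c,c'}=\omega_{i,c,c'}/\omega_{c,c'}$ and $\pi_{c,c'}=\omega_{c,c'}/\omega_c$ telescope. This gives
\[
\omega_c\tilde{\vg}_c=\sum_{c'\neq c}\sum_{i\in\mathcal{I}_c}\omega_{i,c,c'}\tilde{\vg}_{i,c,c'}=\frac{1}{n_cC}\sum_{i\in\mathcal{I}_c}\mathbf{J}_i^\top\sum_{c'\neq c}p_{i,c,c'}(\vp_i-\ve_{c'}).
\]
I read the $\vg_{c,c'}$ in the definition of $\tilde{\vg}_c$ as $\tilde{\vg}_{c,c'}$.

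\textbf{Step 3: the key cancellation.} Because $\sum_{c'}p_{i,c,c'}=1$, the full sum satisfies $\sum_{c'=1}^C p_{i,c,c'}(\vp_i-\ve_{c'})=\vp_i-\vp_i=0$. Dropping the $c'=c$ term therefore gives $\sum_{c'\neq c}p_{i,c,c'}(\vp_i-\ve_{c'})=-p_{i,c,c}(\vp_i-\ve_c)$. Substituting into Step 2,
\[
\omega_c\tilde{\vg}_c=-\frac{1}{C}\cdot\frac{1}{n_c}\sum_{i\in\mathcal{I}_c}\vp(\bm{\theta},\vx_{i,c})_c\,\hat{\vg}_i=-\frac{1}{C}\vg_c'.
\]
This step is the heart of the proof and is the one I would double-check, especially the sign and index conventions in (\ref{eq:papyan_def}).

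\textbf{Step 4: conclude.} All softmax probabilities are strictly positive, so $\omega_c>0$. Therefore $\hat{\mG}=\sum_c\omega_c\tilde{\vg}_c\tilde{\vg}_c^\top=\sum_c\frac{1}{C^2\omega_c}\vg_c'\vg_c'^\top$. For a positive combination of rank-one PSD matrices, $\vx^\top\hat{\mG}\vx=0$ holds if and only if $\vx\perp\vg_c'$ for every $c$. So $\ker\hat{\mG}=\{\vg_c'\}^{\perp}$, and $\text{span}(\hat{\mG})=\text{span}(\vg_1',\dots,\vg_C')=\text{span}(\tilde{\vg}_1,\dots,\tilde{\vg}_C)$. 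Under the independence hypothesis this space has dimension exactly $C$.

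The main obstacle is notational rather than mathematical. The statement's notation ($\vg_{\text{class}}$, $\vg_{c,c'}$ versus $\tilde{\vg}_{c,c'}$, $\vx_i$ versus $\vx_{i,c}$) has to be pinned down consistently before Step 3 can be applied cleanly.
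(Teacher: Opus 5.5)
Your proposal is correct and follows the same route as the paper's proof. Both use the Jacobian form $\tilde{\vg}_{i,c,c'}=\mathcal{J}_{i,c}^\top(\vp_i-\ve_{c'})$ and the cancellation $\sum_{c'}p_{i,c,c'}(\vp_i-\ve_{c'})=\bm{0}$, which leaves only the $c'=c$ term and gives $\omega_c\tilde{\vg}_c=-\frac{1}{C}\vg_c'$. Your Step 4 is more explicit than the paper, which only asserts the span equality ``since the $\vg_c$'s are linearly independent''; your PSD-kernel argument shows that hypothesis is not needed for the span identity. One caveat concerns your closing dimension-$C$ remark: independence of the unweighted $\vg_c$ does not by itself make the $\vg_c'$ independent, so that remark holds only if the hypothesis is read as applying to the $\vg_c'$.
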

Theorem~\ref{thm:simplify_g_prime} simplified the row space of $\vg_{\text{classs}}$ into the span of vectors $\vg_c'$s, providing better interpretability of the subspace directions. Instead of relying on the notion of cross-class gradients, the definition of $\vg_c'$ is simply a weighted average of the per-sample gradients in class $c$. 

Comparing the form of $\vg_c'$ to $\vg_c$ in our setup, the difference is that our $\vg_c$ is an unweighted average of the per-sample gradients. To bridge this gap, our second observation is that the weighting probabilities $\vp\paren{\bm{\theta},\vx_{i,c}}_c$ in $\vg_c'$ can be dropped without affecting the direction of $\vg_c'$ too much. 
\begin{theorem}\label{thm:g_subspace_diff}
    Let $\mU,\mU'\in\R^{d\times C}$ denote the matrices such that the columns of $\mU$ and $\mU'$ are orthonormal basis of $\text{span}\paren{\vg_1\paren{\bm{\theta}},\dots,\vg_C\paren{\bm{\theta}}}$ and $\text{span}\paren{\vg_1'\paren{\bm{\theta}},\dots,\vg_C'\paren{\bm{\theta}}}$, respectively. Let $\mG' \in\mathbb{R}^{C\times p}$ such that the $c$th row of $\mG'$ is $\vg_c'$. Then we have that
    \[
        \sigma_{\min}\paren{\mU^\top\mU'}^2 \geq 1 - \frac{\norm{\bm{\Sigma}_{\text{bulk}}\paren{\bm{\theta}}}_2}{4\sigma_{C}\paren{\mG'}^2}
    \]
\end{theorem}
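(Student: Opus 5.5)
The plan is to reduce the statement to a perturbation bound on each generator $\vg_c'$ relative to $\text{span}\paren{\vg_1,\dots,\vg_C}$, and then convert that bound into a bound on principal angles. Throughout I assume both spans are $C$-dimensional, i.e. $\sigma_C\paren{\mG'}>0$ and the $\vg_c$ are linearly independent, as in Theorem~\ref{thm:simplify_g_prime}.

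\textbf{Step 1: principal angles.} For orthonormal $\mU,\mU'$ of equal width,
\[
\sigma_{\min}\paren{\mU^\top\mU'}^2 = 1 - \max_{\vv\in\text{span}(\mU'),\,\norm{\vv}_2=1}\norm{\paren{\mI-\mU\mU^\top}\vv}_2^2 .
\]
So it suffices to show that every unit $\vv$ in the row space of $\mG'$ satisfies $\norm{\paren{\mI-\mU\mU^\top}\vv}_2^2\le \norm{\bm{\Sigma}_{\text{bulk}}}_2/\paren{4\sigma_C\paren{\mG'}^2}$. Write $\vv=\mG'^\top\va$. Then $\norm{\va}_2\le 1/\sigma_C\paren{\mG'}$.

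\textbf{Step 2: decompose each $\vg_c'$.} Put $w_i=\vp\paren{\bm{\theta},\vx_i}_c\in[0,1]$ for $i\in\mathcal{I}_c$, and let $\bar w_c$ be their mean over $\mathcal{I}_c$. Then
\[
\vg_c' = \bar w_c\,\vg_c + \ve_c,\qquad \ve_c := \frac{1}{n_c}\sum_{i\in\mathcal{I}_c}\paren{w_i-\bar w_c}\paren{\hat{\vg}_i-\vg_c}.
\]
This holds because $\sum_{i\in\mathcal{I}_c}\paren{w_i-\bar w_c}=0$. Since $\bar w_c\vg_c$ lies in $\text{span}(\mU)$, we get
\[
\paren{\mI-\mU\mU^\top}\vv=\paren{\mI-\mU\mU^\top}\sum_c a_c\ve_c,
\]
so it remains to bound $\norm{\sum_c a_c\ve_c}_2$.

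\textbf{Step 3: variational bound.} Fix a unit vector $\vu$. By Cauchy--Schwarz within each class,
\[
\left|\vu^\top\ve_c\right| \le \paren{\frac{1}{n_c}\sum_{i\in\mathcal{I}_c}\paren{w_i-\bar w_c}^2}^{1/2}\paren{\vu^\top\bm{\Sigma}_c^{w}\vu}^{1/2}.
\]
Here $\bm{\Sigma}_c^{w}$ is the within-class term in (\ref{eq:cov_c_def}). Popoviciu's inequality for weights in $[0,1]$ bounds the first factor by $1/2$; this is where the $4$ in the statement comes from. Cauchy--Schwarz over $c$ then gives
\[
\left|\vu^\top\textstyle\sum_c a_c\ve_c\right|\le \tfrac12\norm{\va}_2\paren{\textstyle\sum_c\vu^\top\bm{\Sigma}_c^{w}\vu}^{1/2}.
\]
Taking the supremum over $\vu$ and using $\norm{\va}_2\le 1/\sigma_C\paren{\mG'}$ finishes the argument, provided $\sum_c\bm{\Sigma}_c^{w}$ can be controlled by $\bm{\Sigma}_{\text{bulk}}$.

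\textbf{Main obstacle: the class weighting.} The definition gives $\bm{\Sigma}_{\text{bulk}}=\sum_c\frac{n_c}{n}\bm{\Sigma}_c^{w}$, so $\sum_c\bm{\Sigma}_c^{w}\preceq \frac{n}{\min_c n_c}\bm{\Sigma}_{\text{bulk}}$, which is off by a factor of roughly $C$ in the balanced case. I would first try to absorb the class weights into $\va$. Rescaling $\vg_c'$ by $\sqrt{n_c/n}$ leaves the span unchanged, and a weighted Cauchy--Schwarz then produces exactly $\vu^\top\bm{\Sigma}_{\text{bulk}}\vu$. The cost is that the denominator becomes $\sigma_C$ of the row-rescaled $\mG'$. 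If this cannot be matched to the stated constant, I would present the bound with $\sigma_C$ of the rescaled matrix, or with an explicit factor $n/\min_c n_c$, and note that the two agree up to that constant for balanced classes.
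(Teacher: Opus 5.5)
Your argument is essentially the paper's proof. The paper's Lemma~\ref{lem:q_inner_diff} is your Steps 2--3: write $\vg_c'-\bar p_c\vg_c$ as a covariance between the weights $\vp(\bm\theta,\vx_i)_c$ and the within-class fluctuations, then apply Cauchy--Schwarz with variance at most $\tfrac14$. The paper then does exactly the rescaling you propose. It forms the matrix with columns $\sqrt{n_c/n}\,\vg_c'$ so that $\sum_c\frac{n_c}{n}\bm\Sigma_c=\bm\Sigma$ appears, and converts to principal angles through the smallest singular value of that rescaled matrix, as in your Step 1. The only difference is cosmetic. The paper uses the full per-class $\bm\Sigma_c$ and removes the between-class part by restricting the test vector to $\text{span}(\vg_1,\dots,\vg_C)^\perp\subseteq\text{col}(\bm\Sigma_{\text{top}})^\perp$. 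You instead work with the within-class part $\bm\Sigma_c^{w}$ directly and take the supremum over all unit vectors.

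The obstacle you isolated is not a weakness of your method, and you should not try to match the stated constant. The paper's proof also ends with $\sigma_{\min}$ of $\mathrm{diag}(\sqrt{n_c/n})\,\mG'$, not with $\sigma_C(\mG')$. Since $n_c/n\le 1$, the stated inequality is strictly stronger than what is proved, and in fact it is false.

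Take $C=2$, $p=3$, and two samples per class. Let the per-sample gradients be $(1,0,\pm s)^\top$ in class $1$ and $(0,1,\pm s)^\top$ in class $2$. Put $\vp(\bm\theta,\vx_i)_c=1-\delta$ on the $+$ sample and $\delta$ on the $-$ sample; this is realizable because nothing constrains the per-sample Jacobians. Then $\vg_c'=\tfrac12\vg_c+\tfrac{(1-2\delta)s}{2}(0,0,1)^\top$, $\bm\Sigma_{\text{bulk}}=s^2\,\mathrm{diag}(0,0,1)$, and $\sigma_2(\mG')^2=\tfrac14$. So the claimed lower bound is $1-s^2$.

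However, the two spans share $(1,-1,0)^\top$, and the other principal angle is attained at $\vg_1'+\vg_2'$. This gives $\sigma_{\min}(\mU^\top\mU')^2=1/\paren{1+2(1-2\delta)^2s^2}$. For $s=0.2$ and $\delta\to0$ this equals $25/27<0.96$.

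The same construction with $C$ balanced classes and orthonormal $\vg_c$ gives $\sigma_{\min}(\mU^\top\mU')^2=1/(1+Cs^2)\approx 1-Cs^2$, while the claimed bound stays $1-s^2$. So the factor $C$ you found is genuinely necessary.

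State the result with $\sigma_C$ of the row-rescaled matrix. Alternatively, use the weaker form with an extra factor $n/\min_c n_c$ in the numerator, or the form with $\norm{\sum_c\bm\Sigma_c^{w}}_2$ from your unscaled route. That is what both your argument and the paper's actually establish.
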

In particular, Theorem~\ref{thm:g_subspace_diff} states that the overlap between the subspace spanned by $\left\{\vg_c'\right\}_{c=1}^C$ and $\left\{\vg_c\right\}_{c=1}^C$ is off by a magnitude that scales with $\norm{\bm{\Sigma}_{\text{bulk}}\paren{\bm{\theta}}}_2$. By the form of the covariance matrix, $\norm{\bm{\Sigma}\paren{\bm{\theta}}}_2$ scales as $\norm{\vg_c - \vg}_2^2$. We observe that the full gradient $\vg$ is usually much smaller in magnitude than the per-class gradient $\vg_c$, as verified in the first column of Figure~\ref{fig:norm_scales}. In the meantime, $\vg_c$'s can be considered as well separated, as justified by the neural collapse perspective \cite{papyan20prevalence,zhu2021geometric}. This implies that $\sigma_C\paren{\mG'}^2$ roughly has the same scale as $\norm{\bm{\Sigma}\paren{\bm{\theta}}}_2$. Thus, a well-separated top-$(C-1)$ and the bulk subspace such that $\norm{\bm{\Sigma}_{\text{bulk}}\paren{\bm{\theta}}}_2\ll \norm{\bm{\Sigma}\paren{\bm{\theta}}}_2$ implies a near-perfect overlap between the span of $\left\{\vg_c'\right\}_{c=1}^C$ and $\left\{\vg_c\right\}_{c=1}^C$.

To recover our identifier $\left\{\vg_c - \vg\right\}_{c=1}^C$, it remains to explain why subtracting the full gradient from $\vg_c$ still covers the top-$(C-1)$ subspace of the Hessian. Since $\sum_{c} \frac{n_c}{n} \bm{g}_c \bm{g}_c^\top = \bm{\Sigma}_{\mathrm{top}} + \bm{g}\bm{g}^\top$, centering the class gradients removes exactly one rank-one component, along the full gradient $\bm{g}$ and of magnitude $\|\bm{g}\|_2^2$. The left column of Figure~\ref{fig:norm_scales} shows $\|\bm{g}\|_2$ to be much smaller than $\|\bm{g}_c\|_2$ throughout training, so this component is the weakest direction of the uncentered between-class moment rather than one of its top-$(C-1)$ outliers. Centering therefore discards precisely the surplus direction in the $C$-dimensional $\mathrm{span}(\hat{\bm{G}})$ while preserving the outlier subspace it covers.

\subsection{Accounting for the Reason When Top Hessian Subspace Fails to Stabilize}
In the third row of Figure~\ref{fig:hess_cov_stab}, we observed that the top subspace of the Hessian and the gradient covariance does not stabilize when we train a 3-layer CNN with batch normalization. The same setting also gives the experimental result in the third row of Figure~\ref{fig:hessian_cov_relationship} and Figure~\ref{fig:G_overlaps}. Combining the results let us infer some reasons where the top subspace fails to stabilize in this setting. 

Recall from our previous reasoning that when the top subspace stabilizes $i).$ the top subspace can be near-perfectly identified by $\text{span}(\vg_1 - \vg, \dots,\vg_C - \vg)$, and $ii).$ $\text{span}(\vg_1 - \vg, \dots,\vg_C - \vg)$ moves slowly because of the separate between the eigenvalues of the top and bulk subspace of the Hessian. From Figure~\ref{fig:hessian_cov_relationship} and Figure~\ref{fig:G_overlaps}, both conditions seems to break: in the right-most figure of the third rown in Figure~\ref{fig:hessian_cov_relationship}, we see that $\text{span}(\vg_1 - \vg, \dots,\vg_C - \vg)$ only achieves a ~0.6 overlap with the top-$(C-1)$ subspace of the Hessian, leaving a significant part of the Hessian's top-$(C-1)$
 subspace unexplained ans susceptible to chaotic forces during training. In the third row Figure~\ref{fig:hessian_cov_relationship}, we observe that the clear separation between the top and bulk subspace in both the Hessian and the gradient covariance disappears, which lead to a faster change of $\text{span}(\vg_1 - \vg, \dots,\vg_C - \vg)$, and thus a faster change of the component that $\text{span}(\vg_1 - \vg, \dots,\vg_C - \vg)$ overlaps with the top-$(C-1)$ subspace of the Hessian. While both phenomenon seems to be a predictor of when the top subspace stabilization fails, we leave the precise mechanistic understanding of this failure as a future work.

\section{Conclusion}
In this paper we study the phenomenon that the top subspace of the Hessian stabilizes shortly after the training starts. We designed a new instability metric that removes the confound of small parameter update magnitude in the overlap metric, and used it to to confirm the occurrence of the stabilization in some setting of neural network training. We also identified a setting where stabilization fails despite a near-perfect step-wise overlap. To explain the stabilization, we first investigated the relationship between the top subspace of the gradient covariance matrix and the Hessian, and constructed an explicit identifier for this subspace. Under gradient flow, we upper bound the change of this subspace by the operator norm of the Hessian restricted to the bulk, showing that the separation between outlier and bulk eigenvalues is sufficient to explain the early stabilization we observe. 

Our work leaves several open questions: $i).$ a theoretical justification of why the span of $\{\vg_c - \vg\}_{c=1}^C$ identifies the top subspace of the Hessian; $ii).$ a precise characterization of the stabilization failure in some setup; and $iii).$ an understanding of the growth of the top-$(C-1)$ eigenvalues and why they are separated from the bulk. We leave them as future directions.

\bibliography{ref}
\bibliographystyle{abbrv}

\appendix

\section{Experimental Details}\label{app:exp_details}

\subsection{Common experimental protocol}

Each experiment trains a randomly initialized classifier with plain SGD on
a fixed-size subset of a standard image classification benchmark.

\begin{itemize}
  \item \textbf{Optimizer}: vanilla SGD, no momentum (\path|momentum=0.0|), no
    weight decay, no learning-rate schedule, constant learning rate
    throughout.
  \item \textbf{Loss}: cross-entropy on the full 10-way label, both for
    training and for every Hessian/gradient statistic below.
  \item \textbf{Trials}: each setting runs 4 independent repetitions. Each trial gets a freshly initialized model and a fresh optimizer; training data (and, for CIFAR, the specific random subset of the dataset) is the same across trials within a notebook.
  \item \textbf{Results}: all results are aggregated over 4 trials, and the mean is plotted with shaded region denoting the mean$\pm$std.
\end{itemize}

\subsection{Datasets and preprocessing}
MNIST Experiment (MLP):
\begin{itemize}
    \item \textbf{Resolution}: native $28\times28$, 1 channel
    \item \textbf{Preprocessing}: raw pixel values in $[0,1]$, no standardization
    \item \textbf{Training subset}: random 10{,}000-example subset of the 60{,}000-example train split
    \item \textbf{Test set}: full standard 10{,}000-example test split (no subsetting)
\end{itemize}
CIFAR experiments (2- and 3-layer CNN)
\begin{itemize}
    \item \textbf{Resolution}: native $32\times32$, 3 channels
    \item \textbf{Preprocessing}: standardized per input dimension (channel $\times$ row $\times$ col) using the \emph{training subset's own} mean/std (computed after subsetting), applied to both train and test splits
    \item \textbf{Training subset}: random 5{,}000-example subset of the 50{,}000-example train split
    \item \textbf{Test set}: full standard 10{,}000-example test split (no subsetting)
\end{itemize}

\subsection{Model architectures}

\begin{itemize}
  \item \textbf{MLP} (MNIST): fully connected,
    biases enabled. $784 \to 256 \to 128 \to 10$, ReLU activations after
    the first two linear layers, no activation on the output layer. No
    normalization layers, no dropout.
  \item \textbf{2-Layer CNN} (CIFAR-10): two
    (\path|Conv2d(3x3, pad=1)| $\to$ ReLU $\to$ $2\times2$ max-pool)
    blocks at constant width 32 ($32\to32$ channels), \textit{no
    BatchNorm}, followed by a flatten and a single linear classifier
    head ($32\times8\times8 = 2048 \to 10$). Matches the
    ``edge-of-stability'' repository's \path|convnet| with
    \path|widths=[32, 32]|.
  \item \textbf{3-Layer CNN} (CIFAR-10): three
    (\path|Conv2d(3x3, pad=1)| $\to$ ReLU $\to$ \path|BatchNorm2d| $\to$
    $2\times2$ max-pool) blocks at constant width 32, followed by a
    flatten and a single linear classifier head
    ($32\times4\times4 = 512 \to 10$). 
\end{itemize}

\subsection{Per-experiment hyperparameters}

\begin{longtable}{@{}p{3.3cm}p{3cm}p{3cm}p{3cm}@{}}
\toprule
 & MNIST & CIFAR 2-layer  & CIFAR 3-layer \\
\midrule
\endhead
Architecture & MLP & 2-Layer CNN & 3-Layer CNN \\
Train subset size & 10{,}000 & 5{,}000 & 5{,}000 \\
Minibatch size & 100 & 50 & 50 \\
Learning rate & 0.01 & 0.001 & 0.001 \\
Total SGD steps & 5{,}000 & 12{,}000 & 6{,}000 \\
Steps / epoch & 100 & 100 & 100 \\
Epochs & 50 & 120 & 60 \\
Ckpt Freq & 100 (every epoch) & 300 (every 3 epochs) & 300 (every 3 epochs) \\
Ckpt number & 51 (initial + 50) & 41 (initial + 40) & 21 (initial + 20) \\
\bottomrule
\end{longtable}

\paragraph{Compute.} All experiments are run on Google Colab L4 and A100 GPU.

\section{Proof of Theorem~\ref{thm:subspace_change}}\label{app:thm1_proof}
\begin{proof}
    For a matrix function $\mA(t)$, its inverse's time derivative is given by
    \[
        \bm{0} = \frac{d}{dt}\paren{\mA(t)\mA^{-1}(t)} = \dot{\mA}(t)\mA^{-1}(t) + \mA(t)\dot{\mA}^{-1}(t)
    \]
    Therefore
    \[
        \dot{\mA}^{-1}(t) = - \mA^{-1}(t)\dot{\mA}(t)\mA^{-1}(t)
    \]
    Notice that $\frac{d}{dt}\paren{\mG(t)\mG(t)^\top} = \dot{\mG}(t)\mG(t)^\top +\mG(t)\dot{\mG}(t)^\top$.
    Using this formula, we have that
    \begin{align*}
        \dot{\mP}(t) & = \dot{\mG}(t)^\top\paren{\mG(t)\mG(t)^\top}^{-1}\mG(t) + \mG(t)^\top\paren{\mG(t)\mG(t)^\top}^{-1}\dot{\mG}(t)\\
        & \quad\quad\quad - \mG(t)^\top\paren{\mG(t)\mG(t)^\top}^{-1}\paren{\dot{\mG}(t)\mG(t)^\top +\mG(t)\dot{\mG}(t)^\top}\paren{\mG(t)\mG(t)^\top}^{-1}\mG(t)\\
        & = \dot{\mG}(t)^\top\paren{\mG(t)\mG(t)^\top}^{-1}\mG(t) + \mG(t)^\top\paren{\mG(t)\mG(t)^\top}^{-1}\dot{\mG}(t)\\
        & \quad\quad\quad - \mG(t)^\top\paren{\mG(t)\mG(t)^\top}^{-1}\dot{\mG}(t)\mP(t) - \mP(t)\dot{\mG}(t)^\top\paren{\mG(t)\mG(t)^\top}^{-1}\mG(t)\\
        & = \mG(t)^\top\paren{\mG(t)\mG(t)^\top}^{-1}\dot{\mG}(t)\paren{\mI - \mP(t)}\\
        & \quad\quad\quad + \paren{\mI - \mP(t)}\dot{\mG}(t)^\top\paren{\mG(t)\mG(t)^\top}^{-1}\mG(t)\\
        & = \mX(t) + \mX(t)^\top
    \end{align*}
    where in the last line we defined
    \[
        \mX(t) = \paren{\mI - \mP(t)}\dot{\mG}(t)^\top\paren{\mG(t)\mG(t)^\top}^{-1}\mG(t)
    \]
    This gives that
    \begin{align*}
        \norm{\dot{\mP}(t)}_F & \leq 2\norm{\mX(t)}_F\\
        & \leq 2\norm{\paren{\mI - \mP(t)}\dot{\mG}(t)}_F\norm{\paren{\mG(t)\mG(t)^\top}^{-1}\mG(t)}_2\\
        & \leq \frac{2}{\sigma_{\min}\paren{\mG(t)}}\norm{\paren{\mI - \mP(t)}\dot{\mG}(t)^\top}_F
    \end{align*}
    To understand $\dot{\vg}(t)$, we plug in the gradient flow dynamic to obtain that
    \begin{align*}
        \frac{d}{dt}\vg_c(t) = \mH_c(t)\dot{\bm{\theta}}(t) = -\mH_c(t)\vg(t);\;\frac{d}{dt}\hat{\vg}(t) = \mH(t)\dot{\bm{\theta}}(t) = -\mH(t)\vg(t)
    \end{align*}
    Therefore,
    \begin{align*}
        \norm{\paren{\mI - \mP(t)}\dot{\vg}(t)^\top}_F^2 & = \sum_{c=1}^{C-1}\norm{\paren{\mI - \mP(t)}\paren{\mH(t) - \mH_c(t)}\vg(t)}_2^2\\
        & \leq (C-1)\norm{\paren{\mI - \mP(t)}\mH(t)\hat{\vg}(t)}_2^2 + \sum_{c=1}^{C-1}\norm{\paren{\mI-\mP(t)}\mH_c(t)\hat{\vg}(t)}_2^2
    \end{align*}
    Pugging into the bound of $\norm{\dot{\mP}(t)}_F$ gives
    \[
        \norm{\dot{\mP}(t)}_F^2 \leq \frac{4}{\sigma_{\min}\paren{\vg(t)}^2}\paren{(C-1)\norm{\paren{\mI - \mP(t)}\mH(t)\hat{\vg}(t)}_2^2 + \sum_{c=1}^{C-1}\norm{\paren{\mI-\mP(t)}\mH_c(t)\hat{\vg}(t)}_2^2}
    \]
\end{proof}

\section{Proof of Theorem~\ref{thm:hessian_proj}}\label{app:thm2_proof}
\begin{proof}
    By definition of the minimum eigenvalue, we have that
    \begin{align*}
        \lambda_{\min}\paren{\mH} & = \min_{\vv:\norm{\vv} = 1}\vv^\top\mH\vv\\
        & = \min_{\vv:\norm{\vv}}\sum_{c=1}^C\frac{n_c}{n}\vv^\top\mH_c\vv\\
        & \geq \sum_{c=1}^C\frac{n_c}{n}\min_{\vv:\norm{\vv}}\vv^\top\mH_c\vv\\
        & = -\sum_{c=1}^C\frac{n_c}{n}\cdot \nu\\
        & = -\nu
    \end{align*}
    Therefore, we have that
    \begin{align*}
        \norm{\paren{\mI - \mP}\mH}_2 & \leq  \norm{\paren{\mI - \mP}\mH\paren{\mI - \mP}}_2 + \norm{\paren{\mI -\mP}\mH\mP}\\
        & \leq \norm{\paren{\mI - \mP}\paren{\mH + \nu\mI}\mP}_2 + \nu + \varepsilon
    \end{align*}
    Notice that $\mH + \nu\mI$ is positive semi-definite. Therefore, we can envoke Lemma~\ref{lem:matrix_cauchy} to obtain that
    \begin{align*}
        \norm{\paren{\mI - \mP}\paren{\mH + \nu\mI}\mP}_2 & \leq \norm{\paren{\mI - \mP}\paren{\mH + \nu\mI}\paren{\mI - \mP}}_2^{\frac{1}{2}}\norm{\mP\paren{\mH + \nu\mI}\mP}_2^{\frac{1}{2}}\\
        & \leq \sqrt{\paren{\varepsilon + \nu}\paren{\norm{\mH}_2 + \nu}}
    \end{align*}
    This gives that
    \[
        \norm{\paren{\mI - \mP}\mH}_2 \leq \sqrt{\paren{\varepsilon + \nu}\paren{\norm{\mH}_2 + \nu}} + \varepsilon + \nu \leq 2\sqrt{\paren{\varepsilon + \nu}\paren{\norm{\mH}_2 + \nu}}
    \]
    since $\varepsilon \leq \norm{\mH}_2$. Now, we focus on $\mH_c$. Using a similar technique, we obtain that
    \begin{align*}
        \norm{\paren{\mI - \mP}\mH_c}_2 & = \norm{\paren{\mI - \mP}\paren{\mH_c + \nu\mI}} + \nu \\
        & \leq \norm{\paren{\mI - \mP}\paren{\mH_c + \nu\mI}\paren{\mI - \mP}}_2 + \norm{\paren{\mI - \mP}\paren{\mH_c + \nu\mI}\mP}_2 + \nu
    \end{align*}
    For the first term, we notice that
    \[
        \sum_{c'=1}^C\frac{n_c'}{n}\paren{\mH_c' + \nu\mI} = \mH + \nu\mI\;\Rightarrow\; \paren{\mH+\nu \mI} - \frac{n_c}{n}\paren{\mH_c + \nu\mI} = \sum_{c'\neq c}\frac{n_c'}{n}\paren{\mH_c' + \nu\mI} \succeq 0
    \]
    since each $\mH_c' + \nu\mI$ must be positive semi-definite. Therefore, $\norm{\mH_c + \nu\mI}_2 \leq \frac{n}{n_c}\norm{\mH + \nu\mI}_2$, and
    \[
        \frac{n_c}{n}\norm{\paren{\mI - \mP}\paren{\mH_c + \nu\mI}\paren{\mI - \mP}}_2 \leq \norm{\paren{\mI - \mP}\paren{\mH + \nu\mI}\paren{\mI - \mP}}_2 \leq \nu+\varepsilon
    \]
    Thus, we have that
    \[
        \norm{\paren{\mI - \mP}\paren{\mH_c + \nu\mI}\paren{\mI - \mP}}_2\leq \frac{n}{n_c}\paren{\nu + \varepsilon} 
    \]
    Moreover, for $\norm{\paren{\mI - \mP}\paren{\mH_c + \nu\mI}\mP}_2$, by Lemma~\ref{lem:matrix_cauchy}
    \begin{align*}
        \norm{\paren{\mI - \mP}\paren{\mH_c + \nu\mI}\mP}_2 & \leq \norm{\paren{\mI - \mP}\paren{\mH_c + \nu\mI}\paren{\mI - \mP}}_2^{\frac{1}{2}}\norm{\mP\paren{\mH_c + \nu\mI}\mP}_2^{\frac{1}{2}}\\
        & \leq \frac{n}{n_c}\sqrt{(\nu+\varepsilon)\paren{\norm{\mH}_2 + \nu}}
    \end{align*}
    Combining, we have that
    \[
        \norm{\paren{\mI - \mP}\mH_c}_2 \leq \frac{n}{n_c}\paren{\nu + \varepsilon}  + \frac{n}{n_c}\sqrt{(\nu+\varepsilon)\paren{\norm{\mH}_2 + \nu}} \leq \frac{2n}{n_c}\sqrt{(\nu+\varepsilon)\paren{\norm{\mH}_2 + \nu}}
    \]
\end{proof}

\section{Proof of Theorem~\ref{thm:simplify_g_prime}}\label{app:thm3_proof}
\begin{proof}
We write
\begin{align*}
    \tilde{\vg}_c = \sum_{c\neq c'}\sum_{i\in\mathcal{I}_c}\pi_{c,c'}\pi_{i,c,c'}\tilde{\vg}_{i,c,c'} = \frac{1}{\omega_c}\sum_{c\neq c'}\sum_{i\in\mathcal{I}_c}\omega_{i,c,c'}\tilde{\vg}_{i,c,c'} = \frac{1}{\omega_c}\sum_{i\in\mathcal{I}_c}\sum_{c\neq c'}\omega_{i,c,c'}\tilde{\vg}_{i,c,c'}
\end{align*}
Define the Jacobian $\mathcal{J}_{i,c} = \frac{\partial f(\bm{\theta},\vx_{i,c})}{\bm{\theta}} \in \R^{C\times p}$. Then we have that $\tilde{\vg}_{i,c,c'}= \mathcal{J}_{i,c}^\top \paren{\vp(\bm{\theta},\vx_{i,c}) - \ve_{c'}}$. Therefore, 
\begin{align*}
    \sum_{c'=1}^{C}\omega_{i,c,c'}\vg_{i,c,c'} & = \frac{1}{n_cC}\mathcal{J}_{i,c}\sum_{c'=1}^Cp_{i,c,c'}\paren{\vp(\bm{\theta},\vx_{i,c}) - \ve_{c'}}\\
    & = \frac{1}{n_cC}\mathcal{J}_{i,c}\paren{\sum_{c'=1}^Cp_{i,c,c'}\vp(\bm{\theta},\vx_{i,c}) - \sum_{c'=1}^Cp_{i,c,c'}\ve_{c'}}\\
    & = \frac{1}{n_cC}\mathcal{J}_{i,c}\paren{\vp(\bm{\theta},\vx_{i,c}) - \vp(\bm{\theta},\vx_{i,c})}\\
    & = 0
\end{align*}
Therefore, we have that
\[
    \tilde{\vg}_c = -\frac{1}{\omega_c}\sum_{i\in\mathcal{I}_c}\omega_{i,c,c}\tilde{\vg}_{i,c,c} = -\frac{1}{\omega_cn_cC}\sum_{i\in\mathcal{I}_c}p_{i,c,c}\tilde{\vg}_{i,c,c} = -\frac{1}{\omega_cn_cC}\sum_{i\in\mathcal{I}_c}\vp\paren{\bm{\theta},\vx_{i,c}}_c\hat{\vg}_i\paren{\bm{\theta}}
\]
Redefine $\vg_c' = \frac{1}{n_c}\sum_{i\in\mathcal{I}_c}\vp\paren{\bm{\theta},\vx_{i,c}}_c\hat{\vg}_i\paren{\bm{\theta}}$. Since $\vg_c$'s are linearly independent, we have that
\[
    \text{span}\paren{\vg_{\text{class}}} = \text{span}\paren{\tilde{\vg}_1,\dots,\tilde{\vg}_C} = \text{span}\paren{\vg_1',\dots,\vg_C'}
\]
\end{proof}

\section{Proof of Theorem~\ref{thm:g_subspace_diff}}\label{app:thm4_proof}
\begin{proof}
Let $\vq\in\text{span}\paren{\vg_1\paren{\bm{\theta}},\dots\vg_C\paren{\bm{\theta}}}^\top$ be a unit vector. Then we must have that $\vq \in \text{col}\paren{\bm{\Sigma}_{\text{top}}\paren{\bm{\theta}}}^\perp$. By definition, we have that
\[
    \paren{\vq^\top\vg_c'\paren{\bm{\theta}}}^2 = \paren{\vq^\top\paren{\vg_c'\paren{\bm{\theta}} - n_c\bar{p}_c\vg_c\paren{\bm{\theta}}}}^2 \leq \frac{1}{4}\vq^\top\bm{\Sigma}_c\paren{\bm{\theta}}\vq 
\]
where the last inequality follows Lemma~\ref{lem:q_inner_diff}.
Let $\vg\in\R^{d\times C}$ be the matrix such that the $c$th column is $\sqrt{\frac{n_c}{ n}}\vg_c'\paren{\bm{\theta}}$. Then we have that
\[
    \norm{\vg^\top\vq}_2^2 = \sum_{c=1}^C\frac{n_c}{ n}\paren{\vq^\top\vg_c'\paren{\bm{\theta}}}^2 \leq \frac{1}{4}\vq^\top\sum_{c=1}^C\frac{n_c}{ n}\bm{\Sigma}_c\paren{\bm{\theta}}\vq = \frac{1}{4}\vq^\top\bm{\Sigma}\paren{\bm{\theta}}\vq 
\]

Notice that $\mI - \vu\vu^\top$ is the projection operator onto $\text{span}\paren{\vg_1\paren{\bm{\theta}},\dots\vg_C\paren{\bm{\theta}}}^\perp$. Then we have that
\[
    \norm{\paren{\mI - \vu\vu^\top}\vg}_2^2 \leq \sup_{\vq\in \text{span}\paren{\vg_1\paren{\bm{\theta}},\dots\vg_C\paren{\bm{\theta}}}^\top}\norm{\vg^\top\vq}_2^2\leq \frac{1}{4} \sup_{\vq\in\text{col}\paren{\bm{\Sigma}_{\text{top}}\paren{\bm{\theta}}}^\perp}\vq^\top\bm{\Sigma}\paren{\bm{\theta}}\vq \leq \frac{1}{4}\norm{\bm{\Sigma}_{\text{bulk}}\paren{\bm{\theta}}}_2
\]
Therefore
\[
    \norm{\paren{\mI - \vu\vu^\top}\vu'}_2^2 \leq \frac{1}{\sigma_{\min}\paren{\vg}^2}\norm{\paren{\mI - \vu\vu^\top}\vg}_2^2 = \frac{\norm{\bm{\Sigma}_{\text{bulk}}\paren{\bm{\theta}}}_2}{4\sigma_{\min}\paren{\vg}^2}
\]
This implies that
\[
    \sigma_{\min}\paren{\vu^\top\vu'}^2 \geq 1 - \frac{\norm{\bm{\Sigma}_{\text{bulk}}\paren{\bm{\theta}}}_2}{4\sigma_{\min}\paren{\vg}^2}
\]
\end{proof}

\section{Auxiliary Lemmas}\label{app:aux_lemmas}
\begin{lemma}
    \label{lem:matrix_cauchy}
    Let $\mM\in\R^{d\times d}$ be a symmetric positive semi-definite matrix. Then for all matrices $\vq\in\R^{d\times m_1}$ and $\mP\in\R^{d\times m_2}$ we have that
    \[
        \norm{\vq^\top\mM\mP}_2^2\leq \norm{\vq^\top\mM\vq}_2\norm{\mP^\top\mM\mP}_2
    \]
\end{lemma}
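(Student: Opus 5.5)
The lemma is a matrix Cauchy--Schwarz inequality. The plan is to factor $\mM$ through its PSD square root and then use submultiplicativity of the operator norm.

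Since $\mM$ is symmetric positive semi-definite, it has a symmetric PSD square root $\mM^{1/2}$ with $\mM = \mM^{1/2}\mM^{1/2}$. Then
\[
\vq^\top\mM\mP = \paren{\mM^{1/2}\vq}^\top\paren{\mM^{1/2}\mP}.
\]
Set $\mA := \mM^{1/2}\vq\in\R^{d\times m_1}$ and $\mN := \mM^{1/2}\mP\in\R^{d\times m_2}$. Submultiplicativity and invariance of the operator norm under transposition give
\[
\norm{\vq^\top\mM\mP}_2 = \norm{\mA^\top\mN}_2 \leq \norm{\mA}_2\norm{\mN}_2 .
\]

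Next, convert each factor back into a quadratic form using the identity $\norm{\mA}_2^2 = \norm{\mA^\top\mA}_2$, which holds because the squared singular values of $\mA$ are the eigenvalues of the PSD matrix $\mA^\top\mA$. Since $\mA^\top\mA = \vq^\top\mM\vq$ and $\mN^\top\mN = \mP^\top\mM\mP$, we get
\[
\norm{\mA}_2^2 = \norm{\vq^\top\mM\vq}_2,\qquad \norm{\mN}_2^2 = \norm{\mP^\top\mM\mP}_2 .
\]
Squaring the submultiplicativity bound and substituting these two identities gives the claim.

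There is no real obstacle; the only care needed is to apply the identity $\norm{\mA}_2^2=\norm{\mA^\top\mA}_2$ to the factors $\mA$ and $\mN$ and not to the original matrices. An alternative route is the variational form: write $\norm{\vq^\top\mM\mP}_2=\sup_{\vx,\vy}\vx^\top\vq^\top\mM\mP\vy$ over unit vectors and apply the scalar Cauchy--Schwarz inequality for the semi-inner product $\inner{\vu}{\vv}_{\mM}=\vu^\top\mM\vv$ to the vectors $\vq\vx$ and $\mP\vy$. This reaches the same conclusion.
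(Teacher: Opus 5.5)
Your proof is correct and uses the same idea as the paper's: factor $\mM=\mM^{1/2}\mM^{1/2}$ and apply Cauchy--Schwarz. The paper follows exactly your alternative route (scalar Cauchy--Schwarz for $\mM^{1/2}\vq\vu$ and $\mM^{1/2}\mP\vv$ with unit $\vu,\vv$, bounding each quadratic form by the corresponding operator norm, then taking the supremum), while your main argument is the matrix-level packaging of the same step, with submultiplicativity and $\norm{\mA}_2^2=\norm{\mA^\top\mA}_2$ replacing the test-vector supremum.
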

\begin{proof}
    For any vector $\mA,\vb\in\R^d$ such that $\mA^\top\mM\vb \geq 0$ we have that
    \[
        \paren{\mA^\top\mM\vb}^2 = \inner{\mM^{\frac{1}{2}}\mA}{\mM^{\frac{1}{2}}\vb}^2 \leq \norm{\mM^{\frac{1}{2}}\mA}_2^2\norm{\mM^{\frac{1}{2}}\vb}_2^2 = \mA^\top\mM\mA\cdot\vb^\top\mM\vb
    \]
    Now, let $\mA = \vq\vu$ and $\vb = \mP\vv$ with $\norm{\vu}_2 = \norm{\vv}_2 = 1$. Then we have that
    \[
        \paren{\vu^\top\vq^\top\mM\mP\vv}^2 \leq \paren{\vu^\top\vq^\top\mM\vq\vu}\paren{\vv^\top\mP^\top\mM\mP\vv} \leq \norm{\vq^\top\mM\vq}_2\norm{\mP^\top\mM\mP}_2
    \]
    Since the inequality above holds for all $\vu,\vv$ with unit norm, we can conclude that
    \[
        \norm{\vq^\top\mM\mP}_2^2\leq \norm{\vq^\top\mM\vq}_2\norm{\mP^\top\mM\mP}_2
    \]
\end{proof}

\begin{lemma}\label{lem:q_inner_diff}
    Let $\vg_c = \frac{1}{n_c}\sum_{i\in\mathcal{I}_c}\hat{\vg}_i$ and let $\vg_c' = \frac{1}{n_c}\sum_{i\in\mathcal{I}_c}\vp\paren{\bm{\theta},\vx_{i}}_c\hat{\vg}_i$ for a fixed $\bm{\theta}$. Then for any vector $\vq$ we have that
    \[
        \paren{\vq\top\paren{\vg_c' -\bar{p}_c\vg_c\paren{\bm{\theta}}}}^2\leq  \frac{1}{4}\vq^\top\bm{\Sigma}_c\paren{\bm{\theta}}\vq
    \]
    where $\bar{p}_c = \frac{1}{n_c}\sum_{i\in\mathcal{I}_c}\vp\paren{\bm{\theta},\vx_i}_c$ and $\bm{\Sigma}_c\paren{\bm{\theta}}$ is defined in (\ref{eq:cov_c_def}).
\end{lemma}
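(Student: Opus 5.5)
The plan is to write the difference $\vg_c' - \bar{p}_c\vg_c$ as an empirical covariance between the scalar weights $p_i := \vp\paren{\bm{\theta},\vx_i}_c$ and the per-sample gradients, and then apply Cauchy--Schwarz. Throughout, $i$ ranges over $\mathcal{I}_c$. By the definitions of $\vg_c'$, $\vg_c$ and $\bar{p}_c$,
\[
    \vg_c' - \bar{p}_c\vg_c = \frac{1}{n_c}\sum_{i\in\mathcal{I}_c}\paren{p_i - \bar{p}_c}\hat{\vg}_i\paren{\bm{\theta}}.
\]
The key observation is that $\sum_{i\in\mathcal{I}_c}\paren{p_i - \bar{p}_c} = 0$. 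Hence any fixed vector can be subtracted from each $\hat{\vg}_i$ without changing the sum. I would subtract the full gradient $\vg\paren{\bm{\theta}}$, because that is the centering used in the definition (\ref{eq:cov_c_def}) of $\bm{\Sigma}_c\paren{\bm{\theta}}$. This gives
\[
    \vg_c' - \bar{p}_c\vg_c = \frac{1}{n_c}\sum_{i\in\mathcal{I}_c}\paren{p_i - \bar{p}_c}\paren{\hat{\vg}_i\paren{\bm{\theta}} - \vg\paren{\bm{\theta}}}.
\]

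Next I take the inner product with $\vq$ and set $a_i := \vq^\top\paren{\hat{\vg}_i\paren{\bm{\theta}} - \vg\paren{\bm{\theta}}}$. The Cauchy--Schwarz inequality for the uniform average over $\mathcal{I}_c$ then yields
\[
    \paren{\vq^\top\paren{\vg_c' - \bar{p}_c\vg_c}}^2 \leq \paren{\frac{1}{n_c}\sum_{i\in\mathcal{I}_c}\paren{p_i-\bar{p}_c}^2}\paren{\frac{1}{n_c}\sum_{i\in\mathcal{I}_c}a_i^2}.
\]
The second factor equals $\frac{1}{n_c}\sum_{i\in\mathcal{I}_c}\vq^\top\paren{\hat{\vg}_i - \vg}\paren{\hat{\vg}_i - \vg}^\top\vq$. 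By the first line of (\ref{eq:cov_c_def}), this is exactly $\vq^\top\bm{\Sigma}_c\paren{\bm{\theta}}\vq$.

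It remains to bound the first factor by $\frac14$. This factor is the empirical variance of the numbers $p_i$, which all lie in $[0,1]$ since they are softmax probabilities. I would use the elementary (Popoviciu) bound:
\[
    \frac{1}{n_c}\sum_{i\in\mathcal{I}_c}\paren{p_i-\bar{p}_c}^2 = \frac{1}{n_c}\sum_{i\in\mathcal{I}_c}p_i^2 - \bar{p}_c^2 \leq \frac{1}{n_c}\sum_{i\in\mathcal{I}_c}p_i - \bar{p}_c^2 = \bar{p}_c\paren{1-\bar{p}_c} \leq \frac{1}{4},
\]
where the first inequality uses $p_i^2\leq p_i$ for $p_i\in[0,1]$. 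Combining the two factors gives the claim.

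The only step that needs real care is the recentering. Centering at $\vg_c$ would lead to the within-class covariance rather than $\bm{\Sigma}_c$. Since $\bm{\Sigma}_c$ is centered at $\vg$ and dominates that within-class covariance, centering at $\vg$ is the natural choice, and the zero-sum identity for $p_i - \bar{p}_c$ is what makes it free. Everything else is routine. I also note that the statement has coefficient $\bar{p}_c$ on $\vg_c$, not $n_c\bar{p}_c$ as written in the proof of Theorem~\ref{thm:g_subspace_diff}. I would prove the lemma with $\bar{p}_c$, and Theorem~\ref{thm:g_subspace_diff} can use it because $\vq^\top\vg_c = 0$ for $\vq$ orthogonal to the span of the $\vg_c$'s.
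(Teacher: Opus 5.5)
Your proof is correct and follows essentially the paper's route: rewrite $\vg_c' - \bar{p}_c\vg_c$ as an empirical covariance between the weights $p_i$ and the per-sample gradients, apply Cauchy--Schwarz, and bound the variance of the $p_i$ by $\bar{p}_c(1-\bar{p}_c)\leq \frac14$. The only difference is the centering: the paper centers at $\vg_c$, which gives the within-class scatter $\bm{\Sigma}_c - (\vg_c - \vg)(\vg_c - \vg)^\top$, and then drops the rank-one term; you use the zero-sum identity to center at $\vg$ and reach $\bm{\Sigma}_c$ directly, saving a step but giving up the slightly sharper intermediate bound.
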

\begin{proof}
    Recall the definition
    \[
        \vg_c\paren{\bm{\theta}} = \frac{1}{n_c}\sum_{i\in\mathcal{I}_c}\hat{\vg}_i\paren{\bm{\theta}};\quad \vg_c'\paren{\bm{\theta}} = \frac{1}{n_c}\sum_{i\in\mathcal{I}_c}\vp\paren{\bm{\theta},\vx_i}_c\hat{\vg}_i\paren{\bm{\theta}}
    \]
    Fix $\bm{\theta}$. Let $\bar{p}_c := \frac{1}{n_c}\sum_{i\in\mathcal{I}_c}\mP\paren{\bm{\theta},\vx_i}_c$. Then we have that
    \begin{align*}
        \vg_c'\paren{\bm{\theta}} & = \frac{1}{n_c}\sum_{i\in\mathcal{I}_c}\vp\paren{\bm{\theta},\vx_i}_c\paren{\hat{\vg}_i\paren{\bm{\theta}} - \vg_c\paren{\bm{\theta}}} + \bar{p}_c\vg_c\paren{\bm{\theta}}\\
        & = \frac{1}{n_c}\sum_{i\in\mathcal{I}_c}\paren{\vp\paren{\bm{\theta},\vx_i}_c -\bar{p}_c}\paren{\hat{\vg}_i\paren{\bm{\theta}} - \vg_c\paren{\bm{\theta}}} + \bar{p}_c\vg_c\paren{\bm{\theta}}
    \end{align*}
    For any vector $\vq\in\R^p$ we have that
    \begin{align*}
        & \paren{\vq\top\paren{\vg_c' - \bar{p}_c\vg_c\paren{\bm{\theta}}}}^2\\
        & \quad\quad = \paren{\frac{1}{n_c}\sum_{i\in\mathcal{I}_c}\paren{\vp\paren{\bm{\theta},\vx_i}_c - \bar{p}_c}\vq^\top\paren{\hat{\vg}_i\paren{\bm{\theta}} - \vg_c\paren{\bm{\theta}}}}^2\\
        & \quad\quad \leq \paren{\frac{1}{n_c}\sum_{i\in\mathcal{I}_c}\paren{\vp\paren{\bm{\theta},\vx_i}_c - \bar{p}_c}^2}\vq^\top\paren{\frac{1}{n_c}\sum_{i\in\mathcal{I}_c}\paren{\hat{\vg}_i\paren{\bm{\theta}} - \vg_c\paren{\bm{\theta}}}\paren{\hat{\vg}_i\paren{\bm{\theta}} - \vg_c\paren{\bm{\theta}}}^\top}\vq
    \end{align*}
    Notice that 
    \[\frac{1}{n_c}\sum_{i\in\mathcal{I}_c}\paren{\vp\paren{\bm{\theta},\vx_i}_c - \bar{p}_c}^2 = \text{Var}\paren{\vp\paren{\bm{\theta},\vx_i}_c} \leq \bar{p}_c(1-\bar{p}_c) \leq \frac{1}{4}
    \]
    In the meantime
    \[
        \frac{1}{n_c}\sum_{i\in\mathcal{I}_c}\paren{\hat{\vg}_i\paren{\bm{\theta}} - \vg_c\paren{\bm{\theta}}}\paren{\hat{\vg}_i\paren{\bm{\theta}} - \vg_c\paren{\bm{\theta}}}^\top = \bm{\Sigma}_c\paren{\bm{\theta}} - \paren{\vg_c\paren{\bm{\theta}} - \vg\paren{\bm{\theta}}}\paren{\vg_c\paren{\bm{\theta}} - \vg\paren{\bm{\theta}}}^\top
    \]
    Therefore
    \begin{align*}
        \paren{\vq\top\paren{\vg_c' -\bar{p}_c\vg_c\paren{\bm{\theta}}}}^2\leq \frac{1}{4}\paren{\vq^\top\bm{\Sigma}_c\paren{\bm{\theta}}\vq - \paren{\vq^\top\paren{\vg_c\paren{\bm{\theta}} - \vg\paren{\bm{\theta}}}}^2}\leq \frac{1}{4}\vq^\top\bm{\Sigma}_c\paren{\bm{\theta}}\vq
    \end{align*}
\end{proof}

\end{document}